\documentclass[10pt,journal,compsoc]{IEEEtran}
\usepackage[T1]{fontenc}
\usepackage{amsmath,amssymb,amsthm,mathtools}
\usepackage{amsmath,amsfonts,bm}

\def\eqref#1{equation~\ref{#1}}
\def\1{\bm{1}}

\DeclareMathAlphabet{\mathsfit}{\encodingdefault}{\sfdefault}{m}{sl}
\SetMathAlphabet{\mathsfit}{bold}{\encodingdefault}{\sfdefault}{bx}{n}

\usepackage{booktabs,multirow,array}
\usepackage{graphicx}
\usepackage{xcolor}
\usepackage{enumitem}
\usepackage{float}
\usepackage[numbers,sort]{natbib}
\setcitestyle{citesep={], [}}
\makeatletter
\renewcommand{\NAT@def@citea}{\def\@citea{\NAT@separator}}
\makeatother
\AddToHook{env/thebibliography/begin}{\interlinepenalty=10000}
\usepackage{url}
\usepackage[hidelinks,bookmarksnumbered]{hyperref}
\hypersetup{pdftitle={DISCERN: Identifiable Disease Evidence for Generalizable Medical Multimodal Learning},pdfauthor={Xiaoguang Zhu, Pengyi Jiang, Juncen Guo, Jingyi Wu, Lianlong Sun, Yang Liu}}
\newcommand{\method}{DISCERN}

\newcommand{\Bb}{\Gamma}
\newcommand{\Ee}{\mathcal{E}}
\newcommand{\qref}{q^{\mathrm{ref}}_{\pi}}

\renewcommand{\eqref}[1]{(\ref{#1})}
\theoremstyle{plain}
\newtheorem{theorem}{Theorem}
\newtheorem{proposition}{Proposition}

\theoremstyle{definition}

\theoremstyle{remark}

\newenvironment{fullstatement}[1]{\par\medskip\noindent\textbf{#1 (full statement).}\ \itshape}{\par\medskip}
\title{DISCERN: Identifiable Disease Evidence for Generalizable Medical Multimodal Learning}
\author{Xiaoguang~Zhu, Pengyi~Jiang, Juncen~Guo, Jingyi~Wu, Lianlong~Sun, and~Yang~Liu%
\IEEEcompsocitemizethanks{%
\IEEEcompsocthanksitem Xiaoguang Zhu is with Stanford University, Stanford, CA, USA, and the University of California, Davis, Davis, CA, USA. Part of this work was completed at the University of California, Davis.
\IEEEcompsocthanksitem Pengyi Jiang is with the National University of Singapore, Singapore.
\IEEEcompsocthanksitem Juncen Guo and Jingyi Wu are with Fudan University, Shanghai, China.
\IEEEcompsocthanksitem Lianlong Sun is with the University of Rochester, Rochester, NY, USA.
\IEEEcompsocthanksitem Yang Liu is with Tongji University, Shanghai, China.}}
\begin{document}
\bstctlcite{TPAMI:BSTcontrol}
\IEEEtitleabstractindextext{%
\begin{abstract}
Clinical multimodal models can learn enrollment shortcuts shared across modalities and fail on new cohorts. We model these data through a latent structural causal model: disease state and background factors generate physiology through shared mechanisms, while cohorts differ in selection and measurement. Holding acquisition fixed, we identify conditional disease evidence without observing disease-state labels or recovering the full causal model. In common Gaussian coordinates, cross-cohort state alignment and natural-parameter changes identify a minimal selection statistic; conditioning on it removes enrollment-dependent evidence and yields an identifiable reference posterior within the specified shift family. State-dependent interactions can carry evidence invisible to individual-modality posteriors, motivating joint, information-preserving representations. DISCERN combines an information-lattice encoder, a latent-state model, and an outcome head through joint training. Under reversed synthetic enrollment on known latent coordinates, conditioning raises state AUROC from 0.043 to 0.883 and from 0.265 to 0.825, while random conditioning does not. Exploratory cross-database mortality experiments between eICU and MIMIC-IV improve AUROC over matched empirical risk minimization by 0.036 and 0.035. Learned-coordinate diagnostics distinguish preservation of state information from identification of the coordinates required by the estimator.
\end{abstract}
\begin{IEEEkeywords}
Medical multimodal learning, domain generalization, causal representation learning.
\end{IEEEkeywords}}
\maketitle
\IEEEdisplaynontitleabstractindextext
\section{Introduction}
\label{sec:intro}

\IEEEPARstart{M}{edical} multimodal learning integrates imaging, laboratory values, vital signs, cognitive scores, demographics, and genetics to support clinical prediction \cite{acosta2022multimodal,kline2022multimodal,huang2020fusion,liu2025mednursingsurvey}. These observations provide complementary, partial views of a patient's underlying condition. Fusion architectures, cross-modal pre-training, and missing-modality handling have improved predictive performance \cite{hayat2022medfuse,soenksen2022integrated,liang2024foundations,elsharief2025medmod,gong2026aleatoric}, and theory explains when multimodal representations help \cite{huang2021what,lu2023theory,dufumier2025what}. Yet combining more information does not establish which of that information remains reliable when the model moves to a different cohort.

\begin{figure*}[t]
\centering
\includegraphics[width=0.8\linewidth]{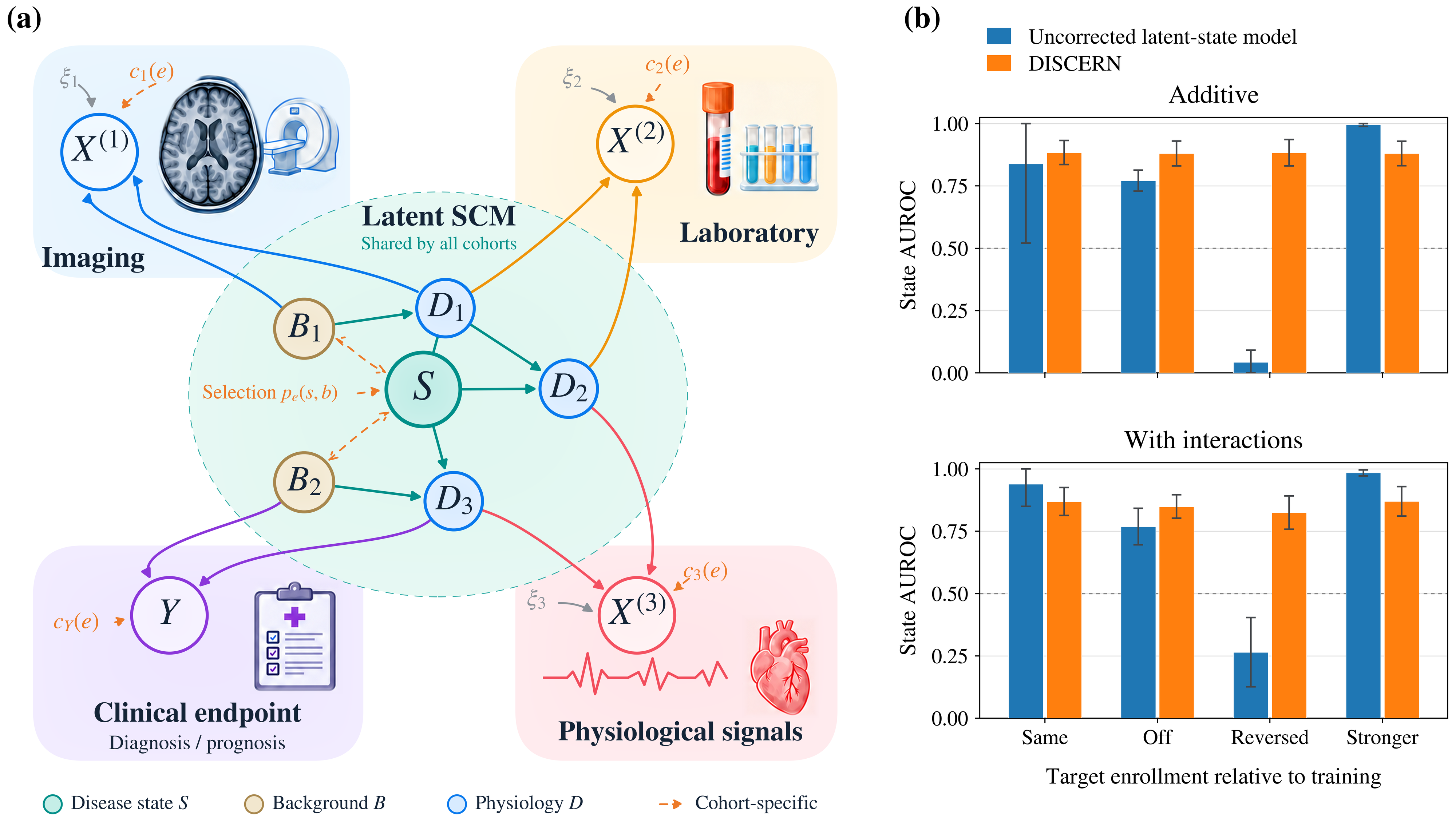}
\caption{\textbf{A generative view of medical multimodal data and enrollment-induced failure.} (a)~Disease state $S$ and background factors $B$ generate physiology $D$ through a latent structural causal model shared across cohorts. Each modality observes an overlapping subset of latent coordinates with private variation $\xi_m$; the endpoint $Y$ is another partial view. Orange indicates cohort-dependent selection $p_e(s,b)$ and measurement $c_m(e)$, $c_Y(e)$. (b)~On the generator's latent coordinates, target cohorts preserve, remove, reverse, or strengthen the source enrollment pattern. The uncorrected latent-state model follows the enrollment and collapses under reversal, whereas the conditional evidence of \method{} remains stable across all four settings (Table~\ref{tab:state}).}
\label{fig:motivation}
\end{figure*}

The difficulty is that clinical observations reflect both the patient and the process through which the patient enters a dataset. Cohorts differ in \emph{measurement}: scanners, assays, and labeling protocols change how a condition is recorded. They also differ in \emph{selection}: enrollment changes disease prevalence and the association between disease and background factors such as age. Because background factors also shape physiology, selection can change the distribution of each modality within a disease state even when the underlying physiological mechanisms remain fixed. Consider Alzheimer's disease assessment using structural MRI and cognitive scores. Both hippocampal volume and memory performance vary with age even among cognitively normal individuals~\cite{jack2015age}. If a source cohort enrolls older patients with Alzheimer's disease and younger controls, both modalities can reinforce an age-related shortcut that becomes unreliable in a target cohort with a different age--disease association. Cross-modal agreement therefore need not indicate stable disease evidence. This concern is consistent with documented shortcut learning and failures of external validation in imaging and clinical prediction~\cite{zech2018variable,degrave2021ai,wong2021external,finlayson2021clinician,allyn2026multisource,tranchellini2026evaluating}.

A further challenge is that the disease state itself is not directly observed. The recorded diagnosis or outcome is an imperfect, protocol-dependent view that can also depend on physiology beyond the state. This distinction complicates existing approaches to generalization. Domain-generalization methods seek stable predictors, risks, or feature statistics across training environments~\cite{arjovsky2019invariant,krueger2021out,sagawa2020distributionally,sun2016deep}, but their success depends on specifying an appropriate notion of invariance~\cite{rosenfeld2021risks,kamath2021does}; suppressing variation across cohorts can also remove disease information when prevalence and case mix change. Nuisance-based and causal methods make stability assumptions explicit~\cite{puli2022nuisance,wang2022provable,peters2016causal,londschien2026domain,zhu2025causaldebiasing,zhu2026robusthealthcare,zhu2026comedpo}, but typically formulate prediction around an observed target or anchor. Causal representation learning establishes conditions for identifying latent variables~\cite{khemakhem2020variational,kivva2022identifiability,yao2024multiview,sun2025causal}, yet latent-variable identification alone does not determine which evidence transfers under changing enrollment (Section~\ref{sec:related} discusses related work in detail). These limitations motivate our central question: \emph{Which evidence about an unobserved disease state is identifiable from differently assembled cohorts and remains valid when enrollment changes?}

We address this question through a generative model of medical multimodal data (Figure~\ref{fig:motivation}a). A latent disease state and background factors jointly generate physiology through mechanisms shared across cohorts. Modalities observe overlapping subsets of these latent quantities, while the recorded endpoint provides an additional partial view. Cohorts vary in selection and measurement; our identification analysis isolates selection by holding acquisition fixed. Figure~\ref{fig:motivation}b illustrates why this distinction matters: even on the generator's latent coordinates, an uncorrected state model can exploit the source enrollment pattern and fail when that pattern reverses. Recovering latent coordinates is therefore insufficient unless the evidence extracted from them is also separated from enrollment.

Our answer is \emph{conditional disease evidence}: the state log-likelihood ratio after subtracting the contribution carried by a \emph{selection statistic}. This statistic captures how enrollment changes the state-conditional distributions across cohorts. Conditioning on it removes this contribution while retaining the information needed to interpret physiology in context. In the Alzheimer's example, the aim is to interpret MRI findings and cognitive scores relative to a patient's age while excluding the enrollment-dependent age--disease association from the evidence. In common Gaussian coordinates with state-specific covariances shared across cohorts, we give conditions that align unobserved states up to one global permutation. Once aligned, their natural-parameter changes identify a minimal linear selection statistic and, in turn, the conditional evidence and a reference posterior. The same evidence transfers to target cohorts whose selection shifts lie within the identified family.

Estimating this evidence from multimodal observations also requires preserving the information on which it depends. Disease evidence may reside in interactions invisible to individual-modality posteriors or in latent factors observed by only a subset of modalities. Our analysis therefore motivates joint modeling and representations that retain shared, partially shared, and private information. These principles lead to \method{}, which combines an information-lattice encoder with an aligned latent-state model and a conditional likelihood-ratio correction. Because the recorded endpoint can depend on continuous physiology beyond the disease state, its outcome head uses both conditional evidence and continuous features. A separate risk decomposition characterizes why improved state evidence need not translate directly into improved endpoint prediction.

In summary, (i)~we formulate cohort variation through selection and measurement in a shared physiological model and, under explicit coordinate and alignment assumptions, identify a minimal linear selection statistic and the conditional disease evidence up to a global state permutation (Section~\ref{sec:theory}); (ii)~we develop \method{}, which estimates this evidence without disease-state labels and combines it with continuous physiology for outcome prediction (Section~\ref{sec:method}); and (iii)~on controlled enrollment shifts, conditioning on the estimated selection statistic restores state recovery under reversed enrollment where random conditioning does not, and on cross-database ICU mortality \method{} improves AUROC over matched ERM in both directions (Section~\ref{sec:experiments}).

\section{Related work}
\label{sec:related}

\subsection{Medical multimodal learning}
Fusion architectures for imaging and electronic health records span early, late and hybrid designs~\citep{huang2020fusion,acosta2022multimodal,kline2022multimodal}; MedFuse handles paired and unpaired time series and radiographs~\citep{hayat2022medfuse}, HAIM combines tabular, temporal, text and image embeddings~\citep{soenksen2022integrated}, and contrastive objectives pre-train imaging encoders from clinical tables~\citep{hager2023best}. Missing-modality transformers~\citep{ma2022multimodal,gong2026aleatoric,zhu2025causaldebiasing,zhu2026robusthealthcare} and latent-space imputation with M3Care~\citep{zhang2022m3care} address incomplete inputs, while modality-balancing objectives~\citep{wang2020what,peng2022balanced} address optimization imbalance. Recent work disentangles shared from modality-specific diagnostic information in paired ECG and cardiac MRI~\citep{tonekaboni2025representation}, treats modality missingness as informative through a latent health state, with experiments on MIMIC-IV and eICU~\citep{liang2025causal}, and benchmarks EHR and chest X-ray fusion within one health system~\citep{elsharief2025medmod}. These studies focus on fusion, representation quality, and modality availability. Their predictive gains do not by themselves establish invariant disease evidence under a change in cohort selection. Shortcut learning in medical imaging~\citep{zech2018variable,degrave2021ai,winkler2019association,oakdenrayner2020hidden,geirhos2020shortcut,brown2023detecting} and deployment failures~\citep{wong2021external,nestor2019feature} document the cost. \citet{huang2021what} explain when multiple modalities provably outperform a single one by the quality of a shared latent representation; \citet{lu2023theory} give a learning-theoretic account based on heterogeneity between modalities and a learnable connection between them. More recent analyses show that contrastive alignment captures mainly redundant information and can miss unique and synergistic terms~\citep{dufumier2025what}, that whether alignment helps depends on the balance of redundant and unique information~\citep{tjandrasuwita2025understanding}, and that modality competition can be rebalanced by a game-theoretic regularizer~\citep{kontras2025balancing}. These analyses address representation quality and multimodal optimization; the question here is which evidence about a latent disease state remains invariant under a specified family of cohort-selection shifts. These studies motivate modeling the acquisition process and spurious associations explicitly. The target in the present work is the identifiable conditional evidence about an unobserved disease state under a specified cross-cohort selection family.

\subsection{Robust prediction under distribution shift}
IRM~\citep{arjovsky2019invariant}, V-REx~\citep{krueger2021out}, GroupDRO~\citep{sagawa2020distributionally}, CORAL~\citep{sun2016deep}, LISA~\citep{yao2022improving}, JTT~\citep{liu2021just}, DFR~\citep{kirichenko2023last}, SWAD~\citep{cha2021swad} and test-time adaptation~\citep{iwasawa2021test,wang2021tent} form the standard baselines; DomainBed~\citep{gulrajani2021search} highlights the strength of ERM under a controlled evaluation protocol, while WILDS~\citep{koh2021wilds} benchmarks distribution shifts across diverse applications, and \citet{rosenfeld2021risks,kamath2021does} show when invariance criteria fail. NuRD~\citep{puli2022nuisance} and invariant-feature subspace recovery~\citep{wang2022provable} are the closest statistical targets; anchor regression~\citep{rothenhausler2021anchor}, invariant causal prediction~\citep{peters2016causal,rojascarulla2018invariant} and conditional variance penalties~\citep{heinzedeml2021conditional} condition on observed labels or anchors. Clinical DG frameworks~\citep{zhang2021empirical} and harmonization~\citep{johnson2007adjusting,fortin2018harmonization} address site effects; \citet{subbaswamy2019preventing,subbaswamy2020development,castro2020causality} argue for causal modelling of clinical shifts. In critical care, multi-database studies find that internal validation overestimates external performance and that site-specific features act as shortcuts for mortality prediction~\citep{allyn2026multisource}, that sepsis models degrade across MIMIC-IV, eICU and HiRID with the best adaptation strategy depending on target data~\citep{tranchellini2026evaluating}, and that anchor regression with the database as anchor aids transfer across nine ICU databases~\citep{londschien2026domain}.

Medical vision--language reliability raises a related distinction between useful context and shortcuts. CoMedPO~\citep{zhu2026comedpo} uses counterfactual preference optimization to mitigate background-related modality bias while preserving useful lesion--background dependencies in medical visual question answering and report generation. This provides a complementary example of retaining clinically informative relations while reducing spurious dependence. Its objective aligns a generative policy through preference supervision; \method{} identifies a conditional likelihood-ratio target for a latent disease state from variation in cohort selection.

\subsection{Identifiable multimodal and causal representations}
Causal representation learning builds on identifiability results for finite Gaussian mixtures~\citep{yakowitz1968identifiability}, nonlinear ICA with auxiliary variables~\citep{hyvarinen2016unsupervised,hyvarinen2019nonlinear,khemakhem2020variational}, mixture priors with piecewise-affine decoders~\citep{kivva2022identifiability}, content and style from augmentations~\citep{vonkugelgen2021self,zimmermann2021contrastive,daunhawer2023identifiability}, multi-view partial observability~\citep{yao2024multiview,sturma2023unpaired}, sparse cross-modal causal structure in biomedical observations~\citep{sun2025causal}, partial disentanglement for adaptation~\citep{kong2022partial} and sparsity~\citep{lachapelle2022disentanglement}, is unified by viewing identification as preserving known invariances~\citep{yao2025unifying}, and is limited by the impossibility of unsupervised disentanglement without inductive bias~\citep{locatello2019challenging}. Latent structure can remain identifiable under selection bias in linear Gaussian models~\citep{dai2025latent}, a setting close to our enrollment shift but aimed at causal discovery rather than transfer. Our result concerns a function of the latent vector, the evidence about an unobserved state that survives enrollment changes, and its alignment and coordinate premises are explicit. Structural and contextual priors also guide inference beyond medical applications: TreeDiff uses abstract-syntax-tree structure in diffusion-based code generation~\citep{zeng2026treediff}, while Cao et al. combine greedy search and chain-of-thought prompting for context-dependent slang interpretation~\citep{cao2025slang}. These provide examples of task-specific inductive biases; the cross-cohort identification result here rests on the explicit mixture and alignment assumptions of Section~\ref{sec:theory}.

\section{Problem formulation}
\label{sec:setting}

\paragraph*{Data-generating model}
Let $S\in\{0,\dots,K-1\}$ be the latent disease state, $B$ the background factors that pre-exist the disease (age, genetics), and $D$ the physiology that the disease shapes. The physiology is generated by mechanisms that are the same in every cohort, $D_j=f_j(D_{\mathrm{pa}(j)},B,S,\epsilon_j)$ along a directed acyclic graph, so background factors enter the physiology as parents (age enters neurodegeneration). We write $Z=(B,D)$ for the full latent vector and keep $B$ apart from $D$ because a cohort can change the two differently: selection can associate $B$ with $S$, but it cannot alter how $S$ and $B$ shape $D$. Modality $m$ observes a subset of it, $X^{(m)}=g_m(Z_{A_m},\xi_m;\,c_m(e))$, with private variation $\xi_m$; the overlaps of the $A_m$ define the shared, pairwise and private blocks of an \emph{information lattice}. The recorded endpoint $Y\sim p(y\mid s,z;\,c_Y(e))$ is one more partial view, not the state, and can depend on physiology beyond $S$.

\paragraph*{How cohorts differ: selection and measurement}
A cohort $e$ generates its data as
\begin{equation}
\begin{aligned}
p_e(s,b,d,x,y)={}&\underbrace{\pi_e(s)\,p_e(b\mid s)}_{\text{selection}}\;p(d\mid b,s)\\
&\times\prod_m p\big(x^{(m)}\mid z_{A_m};c_m(e)\big)\\
&\times p\big(y\mid s,z;c_Y(e)\big).
\end{aligned}
\label{eq:cohort}
\end{equation}
Only two things depend on $e$. \emph{Selection} is whom the cohort enrolls: its prevalence $\pi_e$ and how strongly its enrollment ties background to disease state, $p_e(b\mid s)$; a memory clinic enrolls cases older than its controls, a population cohort does not. \emph{Measurement}, $c(e)=(c_1(e),\dots,c_M(e),c_Y(e))$, is how the cohort records them: scanners, assays and label protocol. The mechanisms $p(d\mid b,s)$ are fixed. Selection is what corrupts state evidence: the $B$--$S$ association belongs to the sampling law, not to the disease, yet the physiology inherits it, so a pooled learner sees it in every modality that observes $D$. We therefore treat selection as the shift to be removed, hold acquisition fixed for identification, and carry label-protocol drift as a separate term of the outcome risk (Proposition~\ref{prop:decomp}). Orthogonal to both is the \emph{mechanism regime}: additive, where the state shifts physiological levels, or with interactions, where it also changes couplings and part of the evidence lives in covariances.

\paragraph*{Problem}
We observe source cohorts $e\in\Ee_{\mathrm{src}}$ with modalities and endpoint but never $S$, and a target cohort with unlabeled modalities only. We seek a function of the modalities whose evidence about $S$ does not depend on selection and is identifiable without observing $S$, and a predictor of $Y$ that uses it. To state what can be recovered we assume common coordinates $z\in\mathbb R^d$ for the latent vector in which each cohort is a Gaussian mixture over states,
\begin{equation}
p_e(z)=\sum_{s=0}^{K-1}\pi_{es}\,\mathcal N(z;\mu_{es},\Sigma_s),\qquad \pi_{es}>0,\ \Sigma_s\succ0,
\label{eq:mixture}
\end{equation}
with components distinct within each cohort and state-specific covariances shared across cohorts (A1). Cohort-specific weights and means carry selection; shared covariances carry the mechanisms, including interactions. Let $U=\Bb^\top Z$ collect the directions along which the state-conditional densities change across cohorts (the \emph{selection statistic}, constructed in Section~\ref{sec:theory}). For a fixed positive reference prior $\pi$ we define the \emph{conditional disease evidence} and the \emph{reference posterior}
\begin{equation}
\begin{aligned}
T_s(z)&=\log\frac{p_e(z\mid s)}{p_e(z\mid 0)}-\log\frac{p_e(U(z)\mid s)}{p_e(U(z)\mid 0)},\\
\qref(s\mid z)&=\operatorname{softmax}_s\{\log\pi_s+T_s(z)\},
\end{aligned}
\label{eq:target}
\end{equation}
with $T_0=0$. The first term is the evidence a cohort's own state model would use; the second is the part of it that selection created. Their difference keeps the background as a covariate rather than as evidence. In Figure~\ref{fig:motivation}a, where selection acts on the background, the linear-Gaussian instance makes $U$ an invertible linear function of $B$ once source enrollment varies along every background factor (Appendix~\ref{app:structural}), and then $T_s(z)=\log p(d\mid b,s)/p(d\mid b,0)$ is exactly the likelihood ratio of the fixed mechanisms: a laboratory value is read relative to age, while the age--disease association that selection created is removed. The learner need not know which coordinates are background: $U$ is identified from how state densities change across source cohorts. The reference posterior generally differs from the cohort's Bayes posterior $p_e(s\mid z)$, which uses selection as evidence and does not transfer. We further assume (A2) at least three source cohorts that differ in enrollment, (A3) one of the alignment conditions of Proposition~\ref{prop:align}, and, for transfer, (A4) that the target's state-conditional laws are Gaussian with the same $\Sigma_s$ and shifts in $\mathcal U$.

\subsection{Exact examples of selection and joint evidence}
\label{sec:examples}

These examples give exact population errors for specified mechanisms (Figure~\ref{fig:examples}). They motivate the identification analysis in Section~\ref{sec:theory}; they are not fitted \method{} or ERM performance.

\begin{figure}[t]
\centering
\includegraphics[width=0.8\linewidth]{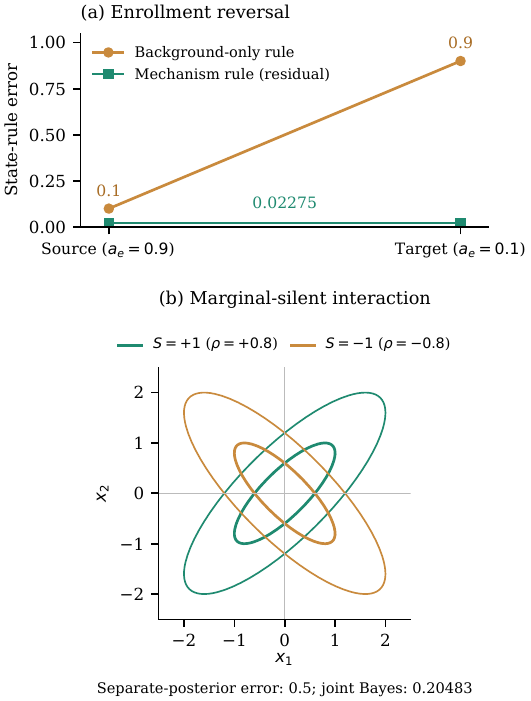}
\caption{\textbf{Analytical illustrations.} (a)~An enrollment reversal changes the error of a background-only rule from $0.1$ to $0.9$, while a rule that uses the specified mechanism through the residual has error $0.02275$ in both cohorts. (b)~Contours at Mahalanobis radii one and two for two Gaussian states with correlations $+0.8$ and $-0.8$. Each single-view posterior is uninformative, whereas the joint interaction yields error $0.20483$. Contours are analytical densities; no error bars apply.}
\label{fig:examples}
\end{figure}

\paragraph*{Enrollment-induced shortcut}
This is the selection of Figure~\ref{fig:motivation}a with one background factor. Let population signs $S,B\in\{-1,+1\}$ be independent and balanced, and let the inclusion indicator $R$ satisfy
\begin{equation}
P(R=1\mid S,B,e)=\begin{cases}a_e,& B=S,\\ 1-a_e,& B\neq S.\end{cases}
\label{eq:inclusion}
\end{equation}
Then $P(R=1)=1/2$, $P(S=+1\mid R=1,e)=1/2$ and $P(B=S\mid R=1,e)=a_e$. The rule $h_B=B$ has selected-cohort error $1-a_e$: $0.1$ for the source law $a_e=0.9$ and $0.9$ for the target law $a_e=0.1$. Let $D=S+B+\epsilon$ with $\epsilon\sim\mathcal N(0,0.5^2)$ independent of $(S,B)$ and of the inclusion noise, inclusion depending only on $(S,B,e)$. With the parents and the mechanism known,
\begin{equation}
\begin{aligned}h_{\mathrm{mech}}&=\operatorname{sign}(D-B),\\ P(h_{\mathrm{mech}}\neq S\mid R=1,e)&=\Phi(-2)\simeq0.02275.\end{aligned}
\end{equation}
Conditioning on $B$ preserves the disease mechanism while removing the selected-cohort association from its evidence, as in the conditional sub-model evidence of Appendix~\ref{app:proofs}. The mechanism and parents are given here; this discrete example lies outside the Gaussian alignment theorem, and target selection can itself carry useful evidence, which is the price in Equation~\eqref{eq:gap}.

\paragraph*{Interaction invisible to separate posteriors}
In a separate two-view model take balanced $S$, independent standard Gaussian noises $\epsilon_1,\epsilon_2$, and
\begin{equation}
\begin{aligned}X_1=D_1&=\epsilon_1,\\ X_2=D_2&=\rho S D_1+\sqrt{1-\rho^2}\,\epsilon_2,\quad \rho=0.8.\end{aligned}
\end{equation}
Both single-view state-conditionals are standard normal, so each single-view Bayes posterior equals $1/2$, and any rule that uses only those two posterior values has error $1/2$. In contrast,
\begin{equation}
\begin{aligned}\log\frac{p(x_1,x_2\mid S=+1)}{p(x_1,x_2\mid S=-1)}&=\frac{2\rho x_1x_2}{1-\rho^2},\\ R^*_{\mathrm{joint}}&=\frac{\arccos\rho}{\pi}\simeq0.20483.\end{aligned}
\end{equation}
The Bayes rule is $\operatorname{sign}(X_1X_2)$. The loss concerns separate-posterior compression, not all late fusion; in Figure~\ref{fig:motivation}a the laboratory modality $X^{(2)}$ observes both $D_1$ and $D_2$, so this particular bottleneck does not apply there.

\section{Identifying disease evidence without observing the state}
\label{sec:theory}

We first align the latent states across cohorts, then identify the selection statistic and the evidence that remains after conditioning on it, show why estimating this evidence requires joint observations and information-preserving representations, and relate it to prediction of the recorded endpoint. The identification results assume the common coordinates of Equation~\eqref{eq:mixture}; recovering such coordinates from raw modalities is a separate requirement. Full statements and proofs appear in Appendix~\ref{app:proofs}.

\subsection{Aligning unknown states across cohorts}
Each cohort's Gaussian mixture identifies its components only up to a permutation~\citep{yakowitz1968identifiability}, and independent permutations in different cohorts do not define a common disease variable. Either of the following conditions establishes the correspondence.

\begin{proposition}[Global state alignment]\label{prop:align}
Under Equation~\eqref{eq:mixture}, states can be aligned across source cohorts up to one global permutation if the shared covariances $\{\Sigma_s\}$ are pairwise distinct. For the three-state family with a common covariance, alignment also holds, up to a global exchange of the two end states, if
\begin{equation}
\mu_{e0}=-\kappa_e\nu,\qquad
\mu_{e1}=a,\qquad
\mu_{e2}=2a+\kappa_e\nu,
\label{eq:parallel}
\end{equation}
where $a$ and $\nu$ are linearly independent and at least three source cohorts have distinct values of $\kappa_e$.
\end{proposition}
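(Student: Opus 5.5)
The plan has two parts. First, reduce both claims to matching unordered sets of component parameters across cohorts. Second, give a cohort-independent rule that labels the components consistently.

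\textbf{Reduction.} By Equation~\eqref{eq:mixture} and the identifiability of finite Gaussian mixtures~\citep{yakowitz1968identifiability}, each source cohort's density $p_e(z)$ determines the unordered set $\{(\pi_{es},\mu_{es},\Sigma_s)\}_{s}$. Components are distinct within each cohort, so this set has exactly $K$ elements. The remaining ambiguity is a permutation $\sigma_e$ per cohort. Alignment means exhibiting a labeling rule, computed from these sets alone, under which all $\sigma_e$ coincide.

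\textbf{Distinct covariances.} Fix one reference cohort $e_0$ and label its components arbitrarily; this choice is the single global permutation. In any other cohort $e$, give the component with covariance $\Sigma$ the label of the component of $e_0$ with the same covariance. Covariances are shared across cohorts (A1), so every covariance in cohort $e$ occurs in $e_0$. They are pairwise distinct, so the match is unique. Hence $\sigma_e=\sigma_{e_0}$ for all $e$.

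\textbf{Three states with a common covariance.} Covariances carry no label here, so I would use the geometry of Equation~\eqref{eq:parallel}.

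\emph{Middle state.} The sum of the three means is $-\kappa_e\nu+a+2a+\kappa_e\nu=3a$ in every cohort. The centroid of each cohort's unordered mean set therefore equals $a$, which is computable and common to all cohorts. Exactly one mean equals the centroid. If $-\kappa_e\nu=a$, then $a$ and $\nu$ would be linearly dependent, which is excluded. The condition $2a+\kappa_e\nu=a$ reduces to the same equation. So state $1$ is identified in every cohort without ambiguity.

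\emph{End states.} In each cohort, the two remaining means minus $a$ form the pair $\{\pm w_e\}$ with $w_e=a+\kappa_e\nu$. Choosing which end is called state $2$ amounts to choosing a sign $\epsilon_e\in\{\pm1\}$ and declaring $\epsilon_e w_e$ to be ``state $2$ minus $a$''. I would select signs by requiring the three points $\epsilon_e w_e$, taken over three cohorts with distinct $\kappa_e$, to be collinear.
\begin{itemize}
\item The consistent choices $\epsilon_e\equiv+1$ and $\epsilon_e\equiv-1$ work: the points lie on the line $a+t\nu$ or on the line $-a-t\nu$, respectively.
\item A mixed choice fails. Take, say, $w_1,w_2,-w_3$. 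The line through the first two points is $\{a+t\nu\}$, because $\kappa_1\neq\kappa_2$. The point $-a-\kappa_3\nu$ would lie on it only if the coefficient of $a$ satisfied $-1=1$, which is impossible when $a,\nu$ are independent. By symmetry, every mixed pattern fails in the same way.
\end{itemize}
So collinearity fixes the labeling up to a single global flip, which is the exchange of states $0$ and $2$. Any further cohort is aligned by the same rule using any two of the already-aligned cohorts, since those two determine the line.

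\textbf{Main obstacle.} I expect the hardest step to be this exclusion of mixed sign patterns. It is the only place where three distinct values of $\kappa_e$ are used: with two cohorts, any two points are collinear, so the test says nothing. Two details in the end-state step need care. The rule must be phrased purely in terms of the unordered identified sets, so that it is well defined. The distinctness of $\kappa_e$ must be invoked so that the two points defining the line are distinct.
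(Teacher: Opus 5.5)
Your proposal is correct and follows the paper's proof. The centroid argument isolates the middle state through linear independence, and your collinearity test on sign choices is the same pigeonhole observation the paper uses: any line through three points of $L_+\cup L_-$, with $L_\pm=\pm a+\operatorname{span}(\nu)$, must coincide with one of these two disjoint parallel lines. The differences are minor. The paper re-derives the Yakowitz identifiability step instead of citing it. It also works with the union of all centred end-means, so it never has to pick cohorts with distinct $\kappa_e$. That choice is in any case detectable from the data, since the unordered pairs $\{\pm w_e\}$ coincide exactly when the $\kappa_e$ do.
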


Distinct covariances provide a unique match for each state. In Equation~\eqref{eq:parallel}, $a$ is both a component mean and the arithmetic centre of every cohort's three component means. After centring at $a$, the remaining means lie on two distinct parallel lines, each containing at least three observed points. These lines determine the correspondence up to their exchange, without knowledge of $\kappa_e$ or a zero-enrollment cohort. Alignment assigns consistent state identities, not their clinical interpretation.

\subsection{The selection statistic and identifiable conditional evidence}
Once states are aligned, their changes across cohorts identify which statistic to condition on. For a reference source cohort $e_0$, define
\begin{equation}
\mathcal U=\operatorname{span}\!\left\{
\Sigma_s^{-1}(\mu_{es}-\mu_{e_0s}):e,s\right\}.
\label{eq:B}
\end{equation}
where $\Gamma$ is a full-column-rank basis of $\mathcal U$. These natural-parameter changes differ from mean differences when physiology is correlated.

\begin{theorem}[Identifiable conditional evidence]\label{thm:evidence}
Assume Equation~\eqref{eq:mixture} and an alignment condition of Proposition~\ref{prop:align}. In the common coordinates, $\mathcal U$ is identifiable from the source laws and is the smallest linear space whose associated statistic determines every same-state density ratio between source cohorts. With $U=\Gamma^\top Z$, the conditional law of $Z$ given $(U,S)$ is common to all source cohorts. Consequently, the reference posterior in Equation~\eqref{eq:target} is identifiable up to one global state permutation, with $\pi$ permuted accordingly; the evidence $T$ is identified with the corresponding choice of baseline state. These conclusions extend to targets satisfying (A4).
\end{theorem}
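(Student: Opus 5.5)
We first invoke the mixture identifiability of \citet{yakowitz1968identifiability} in each source cohort, so that every $(\pi_{es},\mu_{es},\Sigma_s)$ is recovered up to a cohort-specific permutation. Proposition~\ref{prop:align} then fixes a single global permutation $\sigma$. From this point on, every identified quantity is a function of the aligned parameters $\{\mu_{e\sigma(s)},\Sigma_{\sigma(s)}\}$. In particular, $\mathcal U$ is a span of vectors $\Sigma_s^{-1}(\mu_{es}-\mu_{e_0s})$ computed from identified parameters. This set is invariant under the relabelling $s\mapsto\sigma(s)$ because the union ranges over all $s$. Hence $\mathcal U$ itself, not merely a basis $\Gamma$, is identified without any residual ambiguity. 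The statistic $U=\Gamma^\top Z$ is identified only up to an invertible linear map, and we will check that $T$ does not depend on that choice.

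\textbf{Density ratios and minimality.} For a fixed state $s$ and cohorts $e,e_0$, the shared $\Sigma_s$ makes the quadratic terms cancel. We then have
\begin{equation*}
\log\frac{\mathcal N(z;\mu_{es},\Sigma_s)}{\mathcal N(z;\mu_{e_0s},\Sigma_s)}=(\mu_{es}-\mu_{e_0s})^\top\Sigma_s^{-1}z+c_{es}.
\end{equation*}
This ratio is an affine function of $\eta_{es}^\top z$, where $\eta_{es}=\Sigma_s^{-1}(\mu_{es}-\mu_{e_0s})\in\mathcal U$, so it is determined by $U$. Ratios between two non-reference cohorts are differences of such terms.

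For minimality, suppose a linear statistic $W^\top Z$ determines all these ratios. Then each map $z\mapsto\eta_{es}^\top z$ is constant on the fibres of $W^\top$. This means $\eta_{es}\perp\ker W^\top$, i.e.\ $\eta_{es}\in\operatorname{col}(W)$, and therefore $\mathcal U\subseteq\operatorname{col}(W)$.

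\textbf{Common conditional law.} Next we show that $Z\mid(U,S=s)$ is cohort-free. The ratio $p_e(z\mid s)/p_{e_0}(z\mid s)$ equals $\exp(\eta_{es}^\top z+c_{es})$, which is a function $h(u)$ of $u=\Gamma^\top z$ alone. Dividing the joint density by the induced marginal of $U$, which equals $h(u)$ times the $e_0$ marginal, gives $p_e(z\mid u,s)=p_{e_0}(z\mid u,s)$.

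Concretely, one can decompose $z$ into $U$ and a complement $V$ chosen $\Sigma_s$-orthogonally. Under cohort $e$ the Gaussian conditional of $V$ given $U$ then has the same regression matrix and the same mean intercept as under $e_0$, because the mean shift $\Sigma_s\eta_{es}$ lies in the span that projects entirely into $U$.

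\textbf{Identifiability of $T$ and $\qref$.} The key identity is
\begin{equation*}
\log\frac{p_e(z\mid s)}{p_e(z\mid 0)}-\log\frac{p_e(u\mid s)}{p_e(u\mid 0)}=\log\frac{p(z\mid u,s)}{p(z\mid u,0)}.
\end{equation*}
The right-hand side is cohort-free and is a function of identified parameters. It is also unchanged if $\Gamma$ is replaced by $\Gamma A$ with $A$ invertible, since the fibres of $U$ are the same. Consequently $T_s$ is identified once the baseline state is named. Under the global permutation $\sigma$, the baseline becomes $\sigma(0)$, and $T$ changes by subtracting $T_{\sigma(0)}$. This additive shift is absorbed in the softmax once $\pi$ is permuted, so $\qref$ is identified up to $\sigma$.

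\textbf{Extension to targets and the main obstacle.} For targets under (A4), the shifts lie in $\mathcal U$ with the same $\Sigma_s$. The same ratio argument, with $e$ replaced by the target, shows that the target's $Z\mid(U,S)$ coincides with the source law, so $T$ and $\qref$ transfer. I expect the main obstacle to be the conditional-law step: one must make the measure-theoretic factorisation through $U$ precise while handling the singular conditional Gaussian and the choice of $\Gamma$ cleanly. My first attempt will be to work explicitly with the $(U,V)$ block Gaussian and verify that the conditional mean and covariance are free of $\mu_{es}$.
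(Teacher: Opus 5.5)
Your proposal is correct and is essentially the paper's proof. The steps match it one for one: the same-state log-ratio $\eta_{es}^\top z+c_{es}$ is a function of $U$; integrating over a complement $V$ and dividing gives the cohort-free kernel and the expression for $T$; minimality follows from a vector in $\ker W^\top$ that is not orthogonal to some $\eta_{es}$; and basis changes, the global permutation (with $\pi$ permuted and $T$ recentred) and the (A4) targets are handled as in the paper.

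The one small difference is in the minimality step. You read ``determines'' pointwise on the fibres of $W^\top$, whereas the paper's full-support, positive-conditional-variance argument also covers the almost-sure reading (measurability with respect to $A^\top Z$). Your $\Sigma_s$-orthogonal choice of $V$ is a valid but optional explicit check of the integrate-and-divide step.
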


The key identity is that the same-state log-density ratio between two cohorts is linear in $z$ with coefficient in $\mathcal U$, so it depends only on $U$ and conditioning on $U$ cancels it. If $V$ completes $U$ to invertible coordinates, the remaining conditional density is a shared kernel $h_s(v\mid u)$, giving $T_s(z)=\log h_s(v\mid u)-\log h_0(v\mid u)$ for any basis of $\mathcal U$. Useful evidence requires variation beyond the selection statistic: if $\mathcal U=\mathbb R^d$, then $T=0$ and $\qref=\pi$. Source variation must also cover the target shifts stipulated by (A4), and retaining fewer directions than $\dim\mathcal U$, as a low-rank estimator may do, is an approximation. Appendix~\ref{app:structural} gives sufficient structural conditions for rank one, which a single scalar enrollment parameter does not by itself ensure.

\subsection{Reading the modalities jointly and without loss}
Evidence can be lost in two ways. First, fusing per-modality state predictions discards cross-modal dependence, which can carry all state information: if two Gaussian views have correlation $+\rho$ in one state and $-\rho$ in the other, both single-view posteriors are constant, whereas the joint Bayes error is $\arccos(\rho)/\pi<1/2$ (Proposition~\ref{prop:interaction}, Appendix~\ref{app:interaction}). The state model therefore retains covariance terms. Second, the encoder must preserve the information the evidence depends on. Write $\tilde Z=(Z,\xi_{1:M})$ and call the coordinates observed by exactly the modalities in $a$ the block of $a$.

\begin{proposition}[Evidence-preserving representations]\label{prop:lattice}
Fix the acquisition context. Suppose the view maps are diffeomorphisms onto their images, every coordinate of $Z$ is observed by at least one modality, and $\tilde Z$ has a smooth positive density on an open, simply connected product domain.
(i)~Under the known-dimension and population optimization conditions of \citet[Thm.~3.2]{yao2024multiview}, the content shared by a subset of views is recoverable up to a diffeomorphism. View geometry alone permits an exact-support block to be reparameterized conditionally on blocks observed by strict supersets.
(ii)~A common diffeomorphism $F$ of $Z$ preserves $T$ when the conditioning statistic is transported as $U\circ F^{-1}$. If an encoder permits exact reconstruction of all modalities, then $T$ and $\qref$ are functions of its output. Retaining only content shared by all modalities can lose it.
\end{proposition}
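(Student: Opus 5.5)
Part (i) is a direct application of \citet[Thm.~3.2]{yao2024multiview} at a fixed acquisition context $c(e)$, so that the view maps $g_m(\cdot;c_m(e))$ are fixed diffeomorphisms onto their images. For a subset $a$ of modalities, I would pass the joint representation of the views in $a$ through their block-identifiability argument. Given the smooth positive density of $\tilde Z$ on a simply connected product domain, and the population optimum of the contrastive/entropy objective with known dimensions, this yields an encoder output that is a diffeomorphic function of the content $Z_{\cap_{m\in a}A_m}$ shared by those views.

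The second sentence of (i) is a non-identifiability claim, so I would prove it by construction. Take the block $Z_a$ observed by exactly the modalities in $a$, and let $W$ collect the blocks observed by strict supersets of $a$. I would build $\phi(z_a,w)=(\psi_w(z_a),w)$, where $\psi_w$ is a family of diffeomorphisms depending smoothly on $w$. Composing each $g_m$ for $m\in a$ with $\phi^{-1}$ leaves every observational distribution's form unchanged, because the views in $a$ see all of $W$ and can therefore undo $\psi_w$. So view geometry alone cannot exclude conditional reparameterization. The main care needed is to keep the domain a product after the change, which I would secure by taking $\psi_w$ to be measure-transporting maps onto a fixed cell.

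For (ii), I would use the change of variables. With $Z'=F(Z)$, the state-conditional densities become $p_e(F^{-1}(z')\mid s)\,|\det DF^{-1}(z')|$. The Jacobian is common to all states, so it cancels in the ratio $\log p_e(z'\mid s)/p_e(z'\mid 0)$. Likewise, $U'=U\circ F^{-1}(Z')=U(Z)$ is literally the same random variable, so $p_e(U'\mid s)$ is unchanged. Hence $T'_s(F(z))=T_s(z)$, and $\qref$ is invariant. For the reconstruction claim, if a decoder recovers every $X^{(m)}$ from the encoder output $R$, then each $Z_{A_m}$ is a function of $R$ via the inverse of $g_m$. Because every coordinate of $Z$ is observed by some modality, $Z$ itself is a function of $R$, and so $T(Z)$ and $\qref(\cdot\mid Z)$ are functions of $R$.

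The final sentence of (ii) needs a counterexample. I would reuse the interaction example of Section~\ref{sec:examples}, taking three modalities where the third observes nothing shared. This makes the content shared by all modalities trivial, while $T$ depends on $x_1x_2$, so $T$ is not a function of the shared content. I expect the main obstacle to be the conditional-reparameterization construction in (i), in particular stating it cleanly enough to match the hypotheses of Yao et al. The steps in (ii) are routine.
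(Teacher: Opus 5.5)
Your proposal is correct and follows the paper's proof step for step: Yao et al.'s Theorem~3.2 for the shared content, a conditional change of the exact-support block given the blocks of strict supersets, absorbed into the view maps of the modalities in $a$, Jacobian cancellation with the transported statistic $U\circ F^{-1}$, and $Z=\Phi(G(R))$ for the reconstruction claim. The only difference is the final counterexample. The paper puts a mean shift in a pairwise block ($T_1=Z_{12}-1/2$) next to a nontrivial but state-independent shared block $Z_{\mathrm{sh}}$, whereas in your interaction example the all-modality shared content is simply empty; yours is equally valid once you state that all cohorts share the state-conditional laws, so that $U$ is trivial and $T$ is the full log-likelihood ratio, proportional to $x_1x_2$.
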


Shared content for a subset includes the blocks shared by larger subsets. This motivates organizing the representation from broadly shared to partially shared and private information. Reconstruction supplies a sufficient condition for preserving evidence that alignment alone may omit. The population statement does not imply that finite reconstruction error, or a particular residual objective, identifies the exact blocks. Moreover, preserving evidence does not ensure that it is expressed in the Gaussian coordinates needed by our estimator (Appendix~\ref{app:lattice}).

\subsection{From disease evidence to a recorded endpoint}
The reference posterior excludes the target's selection gate $g_s(u)=P_e(S=s\mid U=u)$ and generally differs from its Bayes posterior; for a target satisfying (A4), the excess state log-loss satisfies
\begin{equation}
\begin{aligned}
R^S_e(\qref)-R^S_e(p_e)
&\leq \mathbb E_U\,\mathrm{KL}\bigl(g(U)\,\|\,\pi\bigr)\\
&= I_e(S;U)+\mathrm{KL}\bigl(P_e(S)\,\|\,\pi\bigr).
\end{aligned}
\label{eq:gap}
\end{equation}
The bound vanishes when $S$ and $U$ are independent and $\pi$ matches target prevalence; Appendix~\ref{app:gap} gives the exact discrepancy, which need not increase monotonically with an enrollment parameter. For observed inputs, assume $Z=\Phi(X)$ in the fixed acquisition context and write $q^{X,\mathrm{ref}}_\pi(x)=\qref(\cdot\mid\Phi(x))$, $p^X_e(x)=P_e(S=\cdot\mid X=x)$ and $\eta_e(x)=P_e(Y=1\mid X=x)$.

\begin{proposition}[State-to-task decomposition]\label{prop:decomp}
Suppose, pointwise in $x$, that $|\eta_e-h(p^X_e,r_e)|\leq d_e$ for continuous task information $r_e$ and a map $h$ shared across cohorts and $(L_S,L_R)$-Lipschitz in its two arguments. Let $\hat f=\hat h(\hat q,\hat r)$ be a head with approximation error $\epsilon_h=|\hat h(\hat q,\hat r)-h(\hat q,\hat r)|$. Then
\[
\begin{aligned}
|\hat f-\eta_e|\leq A_e:={}&\epsilon_h+L_S\|\hat q-q^{X,\mathrm{ref}}_\pi\|_1\\
&+L_S\|q^{X,\mathrm{ref}}_\pi-p^X_e\|_1\\
&+L_R\|\hat r-r_e\|+d_e,
\end{aligned}
\]
and the excess binary Brier risk over $\eta_e$ is at most $\mathbb E_e[A_e(X)^2]$.
\end{proposition}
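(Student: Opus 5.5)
The argument is a pointwise triangle inequality followed by squaring and averaging. Fix $x$ and abbreviate $q^{\mathrm{ref}}=q^{X,\mathrm{ref}}_\pi(x)$, $p=p^X_e(x)$, $r=r_e(x)$. Insert the intermediate quantities $h(\hat q,\hat r)$, $h(q^{\mathrm{ref}},\hat r)$, $h(p,\hat r)$ and $h(p,r)$ between $\hat f$ and $\eta_e$, so that $\hat f-\eta_e$ telescopes into
\[
\begin{aligned}
&\bigl[\hat h(\hat q,\hat r)-h(\hat q,\hat r)\bigr]+\bigl[h(\hat q,\hat r)-h(q^{\mathrm{ref}},\hat r)\bigr]+\bigl[h(q^{\mathrm{ref}},\hat r)-h(p,\hat r)\bigr]\\
&\quad+\bigl[h(p,\hat r)-h(p,r)\bigr]+\bigl[h(p,r)-\eta_e\bigr].
\end{aligned}
\]
The triangle inequality then bounds the absolute value by the sum of the absolute values of the five brackets.

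Each bracket is bounded by one of the hypotheses.
\begin{itemize}
\item The first bracket is $\epsilon_h$ by definition.
\item The second and third use the Lipschitz property of $h$ in its first argument, with constant $L_S$ in the $\ell_1$ norm on the simplex, while the second argument is held at $\hat r$. They give $L_S\|\hat q-q^{\mathrm{ref}}\|_1$ and $L_S\|q^{\mathrm{ref}}-p\|_1$.
\item The fourth uses Lipschitzness in the second argument, with the first held at $p$, and gives $L_R\|\hat r-r_e\|$.
\item The fifth is at most $d_e$ by the pointwise approximation hypothesis.
\end{itemize}
Summing gives $|\hat f-\eta_e|\le A_e(x)$ for every $x$.

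The one point to check is the meaning of ``$(L_S,L_R)$-Lipschitz in its two arguments.'' I read it as separate Lipschitz continuity in each argument, uniformly over the other, with $\ell_1$ on the state argument. This reading is exactly what makes the two-step path through $(q^{\mathrm{ref}},\hat r)$ and $(p,\hat r)$ legitimate. A joint Lipschitz condition, $|h(q,r)-h(q',r')|\le L_S\|q-q'\|_1+L_R\|r-r'\|$, implies it, so either interpretation works. Because $\hat h$ is the head the learner actually uses, $h$ must be evaluated at the estimated inputs $(\hat q,\hat r)$ in the first bracket. The ordering above does this, and it is the only place where $\hat h$ enters.

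For the risk statement, use the standard decomposition of binary Brier risk. With $Y\in\{0,1\}$ and $\eta_e(X)=\mathbb P_e(Y=1\mid X)$, condition on $X$ and expand. The cross term has zero conditional mean because $\mathbb E_e[Y-\eta_e(X)\mid X]=0$, which gives
\[
\mathbb E_e\bigl[(\hat f(X)-Y)^2\bigr]-\mathbb E_e\bigl[(\eta_e(X)-Y)^2\bigr]=\mathbb E_e\bigl[(\hat f(X)-\eta_e(X))^2\bigr].
\]
Squaring the pointwise bound $|\hat f-\eta_e|\le A_e$, which is valid because both sides are nonnegative, and taking expectations yields the bound $\mathbb E_e[A_e(X)^2]$.

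The only technical care needed is measurability and integrability of $A_e$. The bound holds trivially if $\mathbb E_e[A_e^2]=\infty$, so this creates no real obstacle. Since the argument is elementary, the main step to state clearly is the interpretation of the Lipschitz hypothesis and the choice of intermediate points in the chain.
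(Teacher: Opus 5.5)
Your proof is correct and follows the paper's argument. The paper inserts only $h(\hat q,\hat r)$ and $h(p^X_e,r_e)$, applies the joint Lipschitz bound, and then uses the $\ell_1$ triangle inequality through $q^{X,\mathrm{ref}}_\pi$, whereas you telescope through two extra intermediate points; the resulting estimate is the same, and the Brier step by conditioning on $X$ is identical.
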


The terms separate head approximation, estimation of the reference posterior, its discrepancy from the target posterior (linked to Equation~\eqref{eq:gap} by Pinsker's inequality; Appendix~\ref{app:decomp}), continuous task information, and residual endpoint mismatch, including label-protocol drift. Better state evidence directly reduces the second term but need not remove the others, so large state gains can coexist with small outcome gains; this motivates the continuous path to the outcome in Section~\ref{sec:method}, where the decomposition describes a coupled approximation problem rather than separately trained stages.

\section{DISCERN: Jointly Learning Conditional Disease Evidence}
\label{sec:method}

\method{} (DISease-state Content idEntification for Robust multimodal learNing) combines multimodal representation learning, latent-state modeling, and outcome prediction in a jointly optimized framework (Figure~\ref{fig:discern_joint}). Section~\ref{sec:theory} specifies the evidence to estimate; the learning objective couples this estimate to a representation that preserves multimodal information and supports the recorded endpoint. Disease-state labels are never required: recorded outcomes supervise the prediction head and, through joint optimization, shape the representation and state model.

\begin{figure*}[t]
\centering
\includegraphics[width=\linewidth]
{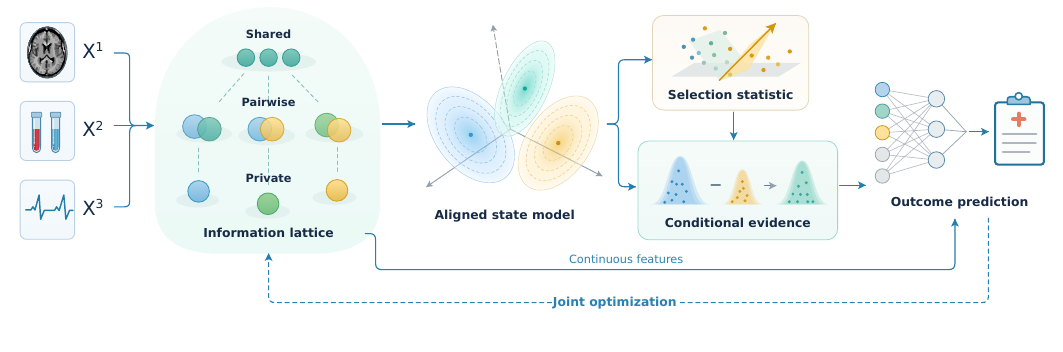}
\caption{\textbf{Joint learning of conditional disease evidence.}
An information-lattice encoder splits the modalities into shared, pairwise, and private blocks; a source-cohort state model supplies the natural-parameter shifts that define the selection statistic; conditional evidence (joint minus marginal log-likelihood ratio) and continuous features predict the outcome; all parts are trained jointly.}
\label{fig:discern_joint}
\end{figure*}

\subsection{Representing overlapping multimodal information}
Let $E_\phi$ be an information-lattice encoder with output $\hat z=E_\phi(X)$. Each block $\hat z_a$ corresponds to information available to the modalities indexed by $a\subseteq[M]$. For three modalities, the lattice includes one shared block, three pairwise blocks, and three private blocks. Blocks are extracted from coarse to fine: the shared block is estimated first, pairwise blocks explain the residuals $\delta_m=\bar x_m-G_m^{\mathrm{sh}}(\hat z_{\mathrm{sh}})$, and private blocks represent the remaining variation. The information objective is
\begin{equation}
\begin{aligned}
\mathcal L_{\mathrm{info}}={}&\sum_{|a|\geq2}\mathcal L^{\mathrm{align}}_a
+\sum_m\|\bar x_m-G_m(\hat z_{a\ni m})\|^2\\
&+\sum_m\bigl[\|\delta_m\|^2
+\|\delta_m-G_m^{\mathrm{pw}}(\hat z_{|a|=2,\,a\ni m})\|^2\bigr]\\
&+\lambda_{\mathrm{var}}\mathcal L_{\mathrm{var}}.
\end{aligned}
\label{eq:lattice}
\end{equation}
Alignment encourages consistent estimates of shared blocks within each source cohort, reconstruction encourages retention of information beyond those blocks, and the residual terms organize the partially shared information. The variance penalty discourages collapse. Reconstruction is motivated by Proposition~\ref{prop:lattice}, with the caveats noted there; it stays active during joint training.

\subsection{Modeling latent states across source cohorts}
We use a Gaussian state model
\begin{equation}
p_\psi(\hat z\mid e)
=\sum_{s=0}^{K-1}\pi_{es}\,
\mathcal{N}(\hat z;\mu_{es},\Sigma_s),
\label{eq:joint_state_model}
\end{equation}
where cohort-specific means and mixture weights accommodate selection, while the covariance $\Sigma_s$ of each state is shared across cohorts. In the clinical implementation, a block-ordered Gaussian structural equation model parameterizes the covariances through state-dependent slopes. EM on source features initializes the state model, with consistent state correspondence across cohorts motivated by Proposition~\ref{prop:align}; the model then enters joint optimization through the feature-density regularizer
\begin{equation}
\mathcal{L}_{\mathrm{dens}}
=-\mathbb{E}_{(X,e)\sim\mathcal{D}_{\mathrm{src}}}
\log p_\psi(E_\phi(X)\mid e).
\label{eq:joint_density}
\end{equation}
It encourages coordinates compatible with the state model; lacking a raw-input change-of-variables term, it is a feature regularizer rather than a normalized generative likelihood of the modalities.

\subsection{Estimating the selection statistic and conditional evidence}
At the current state-model parameters, let $\bar\mu_s=\sum_e w_e\mu_{es}$, where $w_e$ are normalized source weights. We form the natural-parameter shift matrix with columns $\Sigma_s^{-1}(\mu_{es}-\bar\mu_s)$ and obtain a basis $\Gamma$ from its leading singular directions. The experiments retain one direction. The resulting statistic $U=\Gamma^\top\hat z$ estimates the source selection variation characterized by Theorem~\ref{thm:evidence}. Using the reference family $\mathcal{N}(\bar\mu_s,\Sigma_s)$, define $\hat\ell_s(A)=\log\hat p(A\mid s)-\log\hat p(A\mid 0)$. The conditional evidence and reference posterior are
\begin{equation}
\begin{aligned}
\hat T_s(\hat z)
&=\hat\ell_s(\hat z)-\hat\ell_s(\Gamma^\top\hat z),\\
\hat q_\pi(s\mid\hat z)
&=\operatorname{softmax}_s\!
\left(\log\pi_s+\hat T_s(\hat z)\right),
\end{aligned}
\label{eq:joint_evidence}
\end{equation}
where $\pi$ is a fixed positive reference prior. Subtracting the marginal evidence removes the contribution associated with the estimated selection statistic while retaining the conditional relations among physiological quantities, including state-dependent interactions through the covariance terms. The basis $\Gamma$ is recomputed once per epoch outside gradient tracking and held fixed in between.

\paragraph*{Closed-form Gaussian correction}
The estimator uses $\bar\mu_s=\sum_e w_e\mu_{es}$ with $w_e\geq0$ and $\sum_e w_e=1$. This does not change the population subspace:
\begin{equation}
\begin{aligned}
&\operatorname{span}\!\left\{\Sigma_s^{-1}(\mu_{es}-\bar\mu_s):e,s\right\}\\
&\quad=\operatorname{span}\!\left\{\Sigma_s^{-1}(\mu_{es}-\mu_{e_0s}):e,s\right\}
=\mathcal U.
\end{aligned}
\label{eq:centered_span}
\end{equation}
Each centred contrast is a linear combination of reference-cohort contrasts, and subtracting the centred contrast for $e_0$ gives the converse. Moreover, $\Sigma_s^{-1}(\bar\mu_s-\mu_{e_0s})\in\mathcal U$, so the Gaussian reference family $\mathcal N(\bar\mu_s,\Sigma_s)$ has the same conditional kernels $h_s$. It is a single Gaussian for each state, not the pooled mixture of that state's source densities. For $0<r\leq d$, its induced marginal on $U$ gives
\begin{equation}
\begin{aligned}
T_s(z)={}&\log\frac{\mathcal N_d(z;\bar\mu_s,\Sigma_s)}
{\mathcal N_d(z;\bar\mu_0,\Sigma_0)}\\
&-\log\frac{\mathcal N_r(u;\Gamma^\top\bar\mu_s,\Gamma^\top\Sigma_s\Gamma)}
{\mathcal N_r(u;\Gamma^\top\bar\mu_0,\Gamma^\top\Sigma_0\Gamma)},
\end{aligned}
\label{eq:gaussian_evidence}
\end{equation}
Here $u=\Gamma^\top z$. The covariance in each marginal is $\Gamma^\top\Sigma_s\Gamma$; it is positive definite for full-column-rank $\Gamma$. The joint and marginal terms must come from the same state-conditional reference family. Equation~\eqref{eq:joint_evidence} is this construction with learned parameters and coordinates. Exact invariance requires the full subspace; truncating it to a smaller rank generally leaves some selection dependence. Periodically refreshing the basis during joint training changes the estimate, not the population identity.

\subsection{Joint optimization with the recorded outcome}
The outcome predictor combines the conditional state log odds with continuous features $r_\phi(X)$:
\begin{equation}
\begin{aligned}
&\hat p_{\phi,\psi,\theta}(Y=1\mid X)\\
&\quad=h_\theta\!\left(
r_\phi(X),
\left\{\log\frac{\hat q_\pi(s\mid E_\phi(X))}
{\hat q_\pi(0\mid E_\phi(X))}\right\}_{s=1}^{K-1}
\right).
\end{aligned}
\label{eq:joint_outcome}
\end{equation}
The continuous path retains task information beyond the discrete state, as motivated by Proposition~\ref{prop:decomp}; in the clinical model it also includes the context branch described in Appendix~\ref{app:icu}. With binary cross-entropy denoted by $\mathcal{L}_{\mathrm{task}}$, training minimizes
\begin{equation}
\min_{\phi,\psi,\theta}\quad
\mathcal{L}_{\mathrm{task}}
+\lambda_{\mathrm{info}}\mathcal{L}_{\mathrm{info}}
+\lambda_{\mathrm{dens}}\mathcal{L}_{\mathrm{dens}},
\label{eq:discern_joint_objective}
\end{equation}
including the reconstruction decoders among the representation parameters. Outcome gradients pass through both the continuous and conditional-evidence paths, coupling the encoder, state model, and head. The information and density terms act as regularizers ($\lambda_{\mathrm{info}}=0.01$ and $\lambda_{\mathrm{dens}}=0.001$ in the clinical experiments). Joint training requires no observed $S$, although the recorded $Y$ informs the learned states through these gradients. Training starts from a supervised warm-up of the backbone and EM initialization of the state model, with model selection on source data only (Appendix~\ref{app:icu}); at deployment all components are frozen and use only the target patient's inputs.

\section{Experiments}
\label{sec:experiments}

We organize the evaluation around three questions: whether conditioning removes enrollment-dependent state evidence when the required coordinates are available (Theorem~\ref{thm:evidence}); whether learned representations supply those coordinates or merely preserve state information (Proposition~\ref{prop:lattice}); and whether the resulting evidence helps predict recorded clinical endpoints (Proposition~\ref{prop:decomp}). The synthetic analyses isolate the first two questions and their implications for outcome prediction; the real-world analyses evaluate the jointly trained clinical model and its matched ablations.

\subsection{Datasets}
\label{sec:exp-data}
\paragraph*{Synthetic benchmark}
One structural model with three states and ten latent factors generates three partially overlapping modalities through leaky-affine views with private variation. Twelve source cohorts (2{,}000 training and 400 validation observations each) differ in how strongly enrollment ties the root background factors to the state, as in Figure~\ref{fig:motivation}a, and in label style. Four target laws keep, remove, reverse or strengthen the training enrollment. The generator also changes label style and site, so the target laws combine selection and measurement shifts, with enrollment as the principal axis. Two mechanism regimes, additive and with interactions, each have five generated studies and 10{,}000 test observations per target law. The recorded endpoint $Y$ depends on continuous factors and cohort-specific label terms, the $c_Y(e)$ of Equation~\eqref{eq:cohort}. A separate three-factor generator supports the factor-count sensitivity analysis; its absolute results are not pooled with the ten-factor benchmark.

\paragraph*{ICU cohorts}
We evaluate same-admission hospital mortality between eICU-CRD~2.0~\citep{pollard2018eicu} and MIMIC-IV~2.0~\citep{johnson2023mimic} in both directions, using 48 hours of observations, a 6-hour gap and patient-disjoint partitions. Eligible patients are at least 18 years old, with ICU stays of 54--240 hours. The fixed MIMIC-IV target contains 9{,}922 stays from 9{,}922 patients at one physical hospital; the eICU target contains 26{,}763 stays from 22{,}982 patients at 57 hospitals. The databases differ in enrollment, charting and laboratory practice, so acquisition is not held fixed. Target adaptation partitions remain unused. Appendix~\ref{app:icu} reports source role sizes, exclusions and record-availability limitations.

\paragraph*{Exploratory ADNI extension}
An earlier categorical variant classifies contemporaneous CN, MCI and AD diagnoses from T1 MRI, demographics, behaviour and genetics/history. Among 1{,}807 patients, 1{,}264 form source training, 272 source validation and 271 an age-shift target defined by the oldest 15\% within each diagnosis. Reserving separate 54-patient validation and target context sets leaves 218 patients for downstream checkpoint selection and a shared 217-patient primary target. This variant uses a different estimator and is interpreted separately (Appendix~\ref{app:adni}).

\subsection{Implementation details}
\label{sec:exp-implementation}
\paragraph*{Synthetic estimators and controls}
The common-coordinate experiments supply the generator's ten-dimensional $Z$; $S$ is hidden during fitting and used only for evaluation. The state mixture uses $K=3$, at most 150 EM iterations, two restarts and covariance ridge $10^{-3}$. Source-validation density selects common or state-specific covariance; aligned natural-parameter shifts provide a rank-one correction. The averaged estimator combines the original fit and two within-cohort bootstrap fits with equal weights. A separate attribution study compares uncorrected, random-direction and estimated-direction readouts of the same fitted mixture. Outcome heads use two width-64 hidden layers; matched ERM zeros the evidence slots, and the state-only control zeros the continuous coordinates. AdamW uses learning rate $10^{-3}$, weight decay $10^{-4}$, batch size 512, at most 40 epochs and patience 6 on source cohort-average NLL.

The learned-view study replaces $Z$ with a frozen 13-dimensional lattice with block dimensions $(2,1,1,1,3,2,3)$, trained without $S$ or $Y$ using the information objective. It uses three paired studies per regime. Mixture-only, affine and nonlinear invertible refinements test whether a better density fit supplies the required coordinates. Post-fit target-labeled probes diagnose retained state information and never update or select deployed models. Appendix~\ref{app:synthetic} gives seeds, encoder training, alignment, calibration and probe protocols.

\paragraph*{Clinical joint training and matched ablations}
All five rich ICU variants share a 38-dimensional physiology lattice with block dimensions $(4,2,2,2,4,8,16)$, a separate context branch, prediction inputs and source-only model selection. A five-epoch supervised warm-up with AdamW at $10^{-3}$ precedes at most 30 joint epochs at $3\times10^{-4}$, weight decay $10^{-4}$ and batch size 256, with patience 6 on source-validation group-average AUROC. The information and mechanism-density losses have weights $0.01$ and $0.001$; the rank-one selection basis is updated once per epoch. Three seeds (42, 3407 and 2026) repeat training. Because outcome gradients reach the encoder, clinical states are label-informed. Historical baselines share the target cohorts but use different backbones and schedules; Appendix~\ref{app:icu} specifies these differences. The ADNI extension instead reuses one source-supervised encoder and repeats downstream fitting over the same three seeds, as detailed in Appendix~\ref{app:adni}.

\paragraph*{Evaluation and model selection}
State AUROC averages the binary AUROCs for $S>0$ and $S>1$ using posterior mean severity; source $S$ fixes a single global scoring permutation after fitting. Outcome metrics include AUROC, AUPRC, NLL and Brier score. ICU AUROC and AUPRC average physical hospitals, whereas NLL pools target patients. Tables report mean and sample SD across generated studies or training seeds, as specified. Hyperparameters and checkpoints use source data throughout. Some clinical target cohorts were inspected in earlier development rounds, so the clinical comparisons remain exploratory.

\subsection{Analysis on synthetic datasets}
\label{sec:exp-synthetic}
The common-coordinate, learned-view and factor-count studies test different parts of the identification argument. Their controls and replication units are kept separate below.

\subsubsection{State recovery across enrollment laws}
\label{sec:exp-state}
Table~\ref{tab:state} and Figure~\ref{fig:motivation}b test Theorem~\ref{thm:evidence} on common coordinates, comparing the uncorrected state model, the conditional evidence of \method{} (three aligned fits averaged) and a reference oracle that reads the true mechanism with a fixed population prior. Reversal exposes the shortcut: the uncorrected model falls to $0.043$ and $0.265$ state AUROC, and conditioning raises it to $0.883$ and $0.825$. In the additive regime, conditional evidence is nearly flat across the four laws; with interactions, it ranges from $0.825$ to $0.870$. It tracks the reference oracle, with the largest gap ($0.031$) under reversal with interactions. The fourth law shows the price in Equation~\eqref{eq:gap}: a target that selects strongly makes selection itself evidence, which the uncorrected readout exploits ($0.995$, $0.984$) and the conditional evidence removes ($0.880$, $0.870$).

\begin{table*}[t]
\centering\small
\caption{\textbf{State recovery across enrollment laws} (state AUROC on common coordinates, mean$\pm$SD over five generated studies). Targets keep, remove, reverse or strengthen the training enrollment. Bold: better of the two deployable estimators; the reference oracle reads the true mechanism.}
\label{tab:state}
\resizebox{\linewidth}{!}{%
\begin{tabular}{llcccc}
\toprule
Regime & State estimator & same & off & reversed & stronger\\
\midrule
additive & uncorrected latent-state model & $0.839{\scriptstyle\pm.318}$ & $0.771{\scriptstyle\pm.042}$ & $0.043{\scriptstyle\pm.048}$ & $\mathbf{0.995}{\scriptstyle\pm.007}$\\
additive & \textbf{\method{} conditional evidence} & $\mathbf{0.884}{\scriptstyle\pm.048}$ & $\mathbf{0.880}{\scriptstyle\pm.050}$ & $\mathbf{0.883}{\scriptstyle\pm.053}$ & $0.880{\scriptstyle\pm.049}$\\
additive & reference oracle & $0.882{\scriptstyle\pm.050}$ & $0.882{\scriptstyle\pm.050}$ & $0.882{\scriptstyle\pm.050}$ & $0.882{\scriptstyle\pm.050}$\\
\midrule
with interactions & uncorrected latent-state model & $\mathbf{0.939}{\scriptstyle\pm.089}$ & $0.769{\scriptstyle\pm.073}$ & $0.265{\scriptstyle\pm.139}$ & $\mathbf{0.984}{\scriptstyle\pm.012}$\\
with interactions & \textbf{\method{} conditional evidence} & $0.869{\scriptstyle\pm.056}$ & $\mathbf{0.849}{\scriptstyle\pm.047}$ & $\mathbf{0.825}{\scriptstyle\pm.067}$ & $0.870{\scriptstyle\pm.059}$\\
with interactions & reference oracle & $0.854{\scriptstyle\pm.049}$ & $0.855{\scriptstyle\pm.046}$ & $0.856{\scriptstyle\pm.045}$ & $0.853{\scriptstyle\pm.050}$\\
\bottomrule
\end{tabular}}
\end{table*}

\subsubsection{Ablation of the conditioning direction}
A gain under reversal could come from any change to the state model. Five further studies per regime read one fitted mixture in three ways: uncorrected, conditioned on a random unit direction, and conditioned on the estimated direction (Figure~\ref{fig:attrib}, Appendix~\ref{app:synthetic}). Under reversal the estimated direction reaches $0.860$ and $0.861$ state AUROC, against $0.151$ and $0.270$ for a random direction and $0.115$ and $0.252$ uncorrected, improving over both in every study. The correction is attributable to the selection statistic of Theorem~\ref{thm:evidence}, not to conditioning as such.

\begin{figure}[t]
\centering
\includegraphics[width=\linewidth]{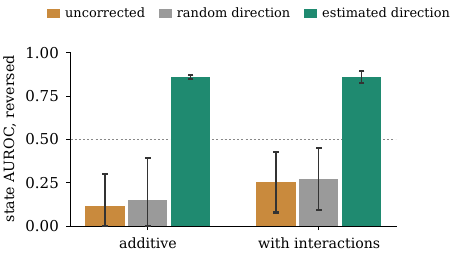}
\caption{\textbf{Attribution of the estimated direction.} State AUROC under reversed enrollment for three readouts of the same fitted mixture: uncorrected, conditioned on a random unit direction, and conditioned on the estimated selection direction. The comparison tests whether the gain comes from the estimated direction; it is not an ablation of the number of latent states. Bars show means; error bars indicate sample standard deviations across five independently generated studies (truncated at 0 where one SD exceeds the mean).}
\label{fig:attrib}
\end{figure}
\subsubsection{From state evidence to the recorded endpoint}
\label{sec:exp-outcome}
By Proposition~\ref{prop:decomp}, large state gains need not carry over to $Y$. On common coordinates under reversal, the head reading continuous $Z$ with the averaged conditional evidence reaches $0.780$ and $0.812$ outcome AUROC against $0.779$ and $0.810$ for matched ERM, and a head that reads only the state falls to $0.697$ and $0.744$ (Table~\ref{tab:outcome}). Physiology carries task information that no state does.

\begin{table*}[t]
\centering\small
\caption{Recorded-endpoint prediction under reversed enrollment, given common $Z$: mean$\pm$SD over five generated studies.}
\label{tab:outcome}
\begin{tabular}{llcccc}
\toprule
Regime & Head & AUROC & AUPRC & NLL & Brier\\
\midrule
additive & ERM & $0.779{\scriptstyle\pm.019}$ & $0.878{\scriptstyle\pm.042}$ & $0.538{\scriptstyle\pm.053}$ & $0.181{\scriptstyle\pm.019}$\\
additive & + conditional average & $0.780{\scriptstyle\pm.021}$ & $0.878{\scriptstyle\pm.040}$ & $0.532{\scriptstyle\pm.044}$ & $0.179{\scriptstyle\pm.016}$\\
additive & + conditional + style & $0.778{\scriptstyle\pm.013}$ & $0.877{\scriptstyle\pm.040}$ & $0.529{\scriptstyle\pm.051}$ & $0.177{\scriptstyle\pm.020}$\\
additive & state only & $0.697{\scriptstyle\pm.066}$ & $0.824{\scriptstyle\pm.062}$ & $0.602{\scriptstyle\pm.091}$ & $0.204{\scriptstyle\pm.031}$\\
\midrule
with interactions & ERM & $0.810{\scriptstyle\pm.073}$ & $0.884{\scriptstyle\pm.087}$ & $0.515{\scriptstyle\pm.145}$ & $0.168{\scriptstyle\pm.051}$\\
with interactions & + conditional average & $0.812{\scriptstyle\pm.068}$ & $0.887{\scriptstyle\pm.084}$ & $0.513{\scriptstyle\pm.143}$ & $0.168{\scriptstyle\pm.050}$\\
with interactions & + conditional + style & $0.810{\scriptstyle\pm.071}$ & $0.884{\scriptstyle\pm.088}$ & $0.513{\scriptstyle\pm.147}$ & $0.168{\scriptstyle\pm.052}$\\
with interactions & state only & $0.744{\scriptstyle\pm.047}$ & $0.852{\scriptstyle\pm.098}$ & $0.566{\scriptstyle\pm.119}$ & $0.192{\scriptstyle\pm.045}$\\
\bottomrule
\end{tabular}
\end{table*}
\subsubsection{Learned coordinates and state readability}
Learning the coordinates is the harder step. On three paired studies a label-free 13-dimensional lattice replaces the supplied $Z$, and state recovery fails: under reversal its state AUROC is $0.118$ and $0.306$, below chance like the uncorrected model on common coordinates, and an invertible calibration fitted jointly with the mixture (Appendix~\ref{app:learned}) leaves it at $0.135$ and $0.307$ although it improves source feature NLL from $5.24$ to $4.23$ and from $5.93$ to $4.94$. The outcome head matches but does not improve on the matched lattice ERM head, $0.850$ and $0.797$ against $0.863$ and $0.806$ AUROC, within one SD (Table~\ref{tab:learned}). The information is not gone: diagnostic probes trained on separate target-labeled samples read $S$ from the frozen lattice at $0.868$ and $0.832$ under the selection-off law, close to $0.903$ and $0.859$ on the true $Z$. Consistent with Proposition~\ref{prop:lattice}, the reconstruction-trained lattice keeps the evidence but not necessarily in coordinates satisfying Equation~\eqref{eq:mixture}; target views also change site, which the proposition holds fixed, so the two causes are confounded.

\begin{table*}[t]
\centering\footnotesize
\setlength{\tabcolsep}{3pt}
\caption{Learned-view interface under reversed enrollment: the same frozen 13-dimensional lattice is used by all heads; \method{} variants add state evidence. Mean$\pm$SD over three paired studies. Lattice + ERM outputs no state.}
\label{tab:learned}
\begin{tabular*}{\textwidth}{@{\extracolsep{\fill}}lcccccc@{}}
\toprule
& \multicolumn{2}{c}{State AUROC} & \multicolumn{2}{c}{Outcome AUROC} & \multicolumn{2}{c}{Outcome NLL}\\
\cmidrule(lr){2-3}\cmidrule(lr){4-5}\cmidrule(lr){6-7}
Method & Additive & Interactions & Additive & Interactions & Additive & Interactions\\
\midrule
Lattice + ERM & -- & -- & $0.863{\scriptstyle\pm.029}$ & $0.806{\scriptstyle\pm.104}$ & $0.507{\scriptstyle\pm.085}$ & $0.622{\scriptstyle\pm.262}$\\
\method{} base & $0.118{\scriptstyle\pm.095}$ & $0.306{\scriptstyle\pm.083}$ & $0.850{\scriptstyle\pm.034}$ & $0.797{\scriptstyle\pm.107}$ & $0.502{\scriptstyle\pm.092}$ & $0.669{\scriptstyle\pm.295}$\\
\method{} mixture refinement & $0.125{\scriptstyle\pm.076}$ & $0.276{\scriptstyle\pm.095}$ & $0.849{\scriptstyle\pm.032}$ & $0.798{\scriptstyle\pm.104}$ & $0.514{\scriptstyle\pm.103}$ & $0.678{\scriptstyle\pm.310}$\\
\method{} affine calibration & $0.130{\scriptstyle\pm.080}$ & $0.279{\scriptstyle\pm.099}$ & $0.850{\scriptstyle\pm.023}$ & $0.796{\scriptstyle\pm.107}$ & $0.495{\scriptstyle\pm.063}$ & $0.673{\scriptstyle\pm.303}$\\
\method{} nonlinear calibration & $0.135{\scriptstyle\pm.084}$ & $0.307{\scriptstyle\pm.077}$ & $0.849{\scriptstyle\pm.021}$ & $0.795{\scriptstyle\pm.105}$ & $0.522{\scriptstyle\pm.080}$ & $0.683{\scriptstyle\pm.311}$\\
\bottomrule
\end{tabular*}
\end{table*}
\subsubsection{Sensitivity to the number of latent factors}
\label{sec:exp-nfactors}
The structural state model assumes a number $N$ of latent factor nodes, which clinical data do not reveal. Figure~\ref{fig:nfactors} varies $N\in\{2,3,4,5\}$. Panels (a) and (b) use a separate generator that follows the structural model of Figure~\ref{fig:motivation}a with three states and three latent physiology factors, so that $N=3$ is correct, in the additive and interaction regimes; the coordinates are supplied as in Section~\ref{sec:exp-state}, the ERM reference is a transformer on the same coordinates, and each point averages three seeds. Panel (c) reports the corresponding clinical sensitivity analysis discussed in Section~\ref{sec:exp-icu-factors}.

State recovery depends on the assumed number. State AUROC peaks at the true $N=3$ in both regimes and falls to about chance or below when $N$ is misspecified, lowest at $N=2$ and $N=5$ in the additive regime. Even at $N=3$ it is weak, about $0.54$ (additive) and $0.64$ (with interactions), well below Table~\ref{tab:state}; the generator and the state model differ from that benchmark, so the levels are not comparable. The outcome gain is less sensitive: its mean is positive at every $N$ in both regimes, between about $3$ and $5$ points, but the additive error bars include zero at $N=4$ and $N=5$, and the additive gain is largest at $N=2$, where state AUROC is below chance. This is the pattern of Proposition~\ref{prop:decomp}: the head also reads the continuous coordinates, and state recovery improves only one term of the bound. Because the reference is a transformer rather than the matched head of Table~\ref{tab:outcome}, these gains are not comparable with the near-zero gains there.

\begin{figure*}[t]
\centering
\includegraphics[width=\linewidth]{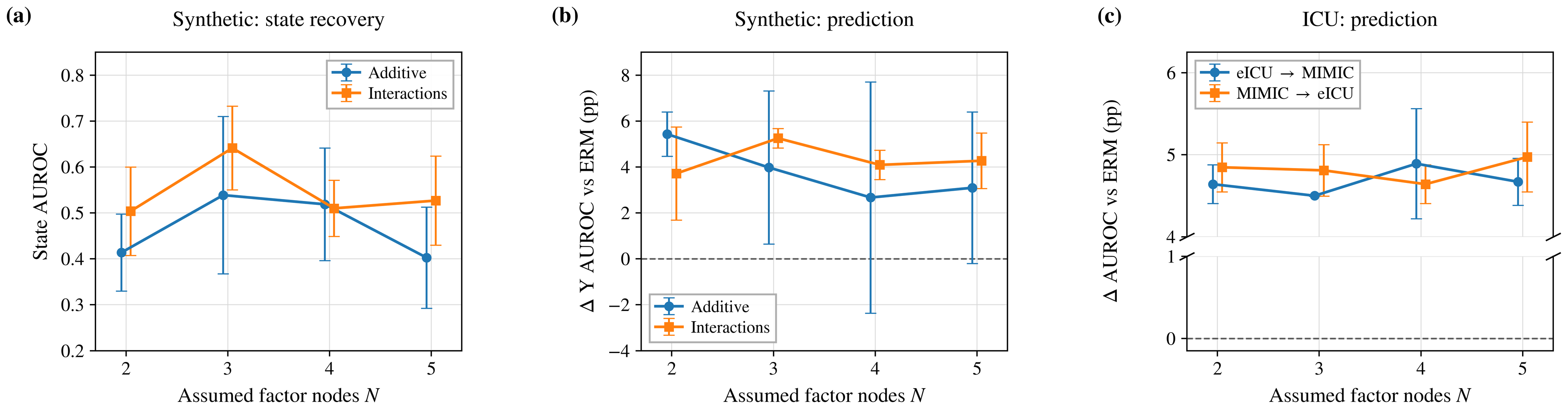}
\caption{\textbf{Number of assumed latent factors.} (a) State AUROC and (b) outcome AUROC gain over a transformer ERM, in percentage points, on a generator with three states and three latent factors, when \method{} assumes $N$ factor nodes. (c) AUROC gain over ERM in the ICU study, where the number of factors is unknown (broken axis). Points are means over three seeds.}
\label{fig:nfactors}
\end{figure*}

\subsection{Analysis on real-world datasets}
\label{sec:exp-real}
\subsubsection{Cross-database ICU mortality}
\label{sec:exp-icu}
Table~\ref{tab:icu} shows that \method{} reaches $0.818$ and $0.812$ AUROC, $0.036$ and $0.035$ above matched ERM, with the best AUPRC and NLL in both directions. NLL falls from $0.383$ and $0.372$ to $0.316$ and $0.328$, so the improvement is not confined to ranking. Into eICU, \method{} exceeds the strongest historical baseline by $0.020$ AUROC; into MIMIC-IV, it is $0.002$ below the SWAD-style recipe on a different backbone. These comparisons assess recorded-outcome prediction under combined selection and measurement shifts; they do not verify recovery of a clinical disease state or the fixed-acquisition premise of the identification theorem.

\begin{table*}[t]
\centering\footnotesize
\setlength{\tabcolsep}{3pt}
\caption{\textbf{Cross-database ICU mortality} (E$\to$M: eICU to MIMIC-IV; M$\to$E: reverse). AUROC and AUPRC average physical hospitals; NLL pools target patients; mean$\pm$SD over three seeds. $\dagger$: source-trained baselines with a different backbone; the lower block shares the \method{} backbone and inputs. Dashes denote unreported NLL values.}
\label{tab:icu}\label{tab:icuabl}
\begin{tabular*}{\textwidth}{@{\extracolsep{\fill}}lcccccc@{}}
\toprule
& \multicolumn{2}{c}{AUROC} & \multicolumn{2}{c}{AUPRC} & \multicolumn{2}{c}{NLL}\\
\cmidrule(lr){2-3}\cmidrule(lr){4-5}\cmidrule(lr){6-7}
Method & E$\to$M & M$\to$E & E$\to$M & M$\to$E & E$\to$M & M$\to$E\\
\midrule
ERM early fusion$^\dagger$ & $0.808{\scriptstyle\pm.002}$ & $0.783{\scriptstyle\pm.015}$ & $0.467{\scriptstyle\pm.012}$ & $0.394{\scriptstyle\pm.027}$ & $0.342{\scriptstyle\pm.009}$ & $0.373{\scriptstyle\pm.057}$\\
ERM late fusion$^\dagger$ & $0.793{\scriptstyle\pm.011}$ & $0.792{\scriptstyle\pm.001}$ & $0.473{\scriptstyle\pm.018}$ & $0.411{\scriptstyle\pm.004}$ & $0.351{\scriptstyle\pm.029}$ & $0.347{\scriptstyle\pm.042}$\\
V-REx$^\dagger$ & $0.804{\scriptstyle\pm.009}$ & $0.778{\scriptstyle\pm.015}$ & $0.461{\scriptstyle\pm.010}$ & $0.388{\scriptstyle\pm.032}$ & $0.340{\scriptstyle\pm.020}$ & $0.368{\scriptstyle\pm.032}$\\
GroupDRO$^\dagger$ & $0.810{\scriptstyle\pm.002}$ & $0.778{\scriptstyle\pm.015}$ & $0.468{\scriptstyle\pm.014}$ & $0.389{\scriptstyle\pm.030}$ & $0.332{\scriptstyle\pm.013}$ & $0.378{\scriptstyle\pm.037}$\\
CORAL$^\dagger$ & $0.805{\scriptstyle\pm.003}$ & $0.774{\scriptstyle\pm.017}$ & $0.462{\scriptstyle\pm.015}$ & $0.383{\scriptstyle\pm.030}$ & $0.332{\scriptstyle\pm.013}$ & $0.362{\scriptstyle\pm.034}$\\
SWAD-style$^\dagger$ & $\mathbf{0.820}{\scriptstyle\pm.001}$ & $0.790{\scriptstyle\pm.013}$ & $0.485{\scriptstyle\pm.008}$ & $0.405{\scriptstyle\pm.023}$ & $0.323{\scriptstyle\pm.008}$ & $0.349{\scriptstyle\pm.045}$\\
\midrule
Matched rich ERM & $0.782{\scriptstyle\pm.017}$ & $0.777{\scriptstyle\pm.011}$ & $0.462{\scriptstyle\pm.022}$ & $0.398{\scriptstyle\pm.012}$ & $0.383{\scriptstyle\pm.030}$ & $0.372{\scriptstyle\pm.024}$\\
Information only & $0.788{\scriptstyle\pm.018}$ & $0.783{\scriptstyle\pm.015}$ & $0.470{\scriptstyle\pm.020}$ & $0.402{\scriptstyle\pm.015}$ & -- & --\\
Uncorrected state & $0.788{\scriptstyle\pm.018}$ & $0.782{\scriptstyle\pm.016}$ & $0.470{\scriptstyle\pm.020}$ & $0.401{\scriptstyle\pm.016}$ & -- & --\\
No edges & $0.788{\scriptstyle\pm.018}$ & $0.782{\scriptstyle\pm.016}$ & $0.470{\scriptstyle\pm.020}$ & $0.401{\scriptstyle\pm.016}$ & -- & --\\
\textbf{\method{}} & $0.818{\scriptstyle\pm.018}$ & $\mathbf{0.812}{\scriptstyle\pm.016}$ & $\mathbf{0.502}{\scriptstyle\pm.020}$ & $\mathbf{0.431}{\scriptstyle\pm.016}$ & $\mathbf{0.316}{\scriptstyle\pm.016}$ & $\mathbf{0.328}{\scriptstyle\pm.028}$\\
\bottomrule
\end{tabular*}
\end{table*}

\subsubsection{Matched architecture ablations}
\label{sec:icuabl}
The five clinical variants share all prediction inputs, the physiology lattice, the context branch, the warm-up, and source-only model selection. Table~\ref{tab:icuabl} isolates the information objective, the correction, and the structural edges. Information learning alone raises AUROC from $0.782$ to $0.788$ into MIMIC-IV and from $0.777$ to $0.783$ into eICU. Adding uncorrected state evidence or removing the mechanism edges leaves performance at approximately this level. Conditional evidence raises AUROC to $0.818$ and $0.812$, with corresponding AUPRC gains. These matched comparisons support the contribution of the conditional mechanism branch within this architecture; the historical baselines in Table~\ref{tab:icu} use different backbones and training schedules.

\subsubsection{Sensitivity to the number of latent factors}
\label{sec:exp-icu-factors}
Figure~\ref{fig:nfactors}c varies the assumed number of latent factors from two to five in the clinical model, where the true number is unknown.
On ICU the gain lies between about $4.5$ and $5$ points at every $N$ in both directions, and its variation across $N$, under half a point, is of the order of the spread across seeds, so the clinical result does not hinge on the assumed number. These gains exceed the $3.6$ and $3.5$ points of Table~\ref{tab:icu} over matched ERM; the sweep is a sensitivity analysis and is not pooled with Table~\ref{tab:icu}.

\subsubsection{Exploratory ADNI results}
The earlier categorical variant does not establish a clinical improvement claim. On the shared 217-patient target, source-strict \method{} has macro-AUROC $0.711\pm0.019$, compared with $0.738\pm0.004$ for early-fusion ERM; target-strict \method{} reaches $0.709\pm0.021$. The deep common-style variant reaches $0.761\pm0.017$ but has higher NLL ($0.937\pm0.031$) than early ERM ($0.917\pm0.043$). Its estimator, target-context access and shared upstream encoder differ from the ICU protocol, so these results delimit the empirical evidence rather than extend the ICU gain. Appendix~\ref{app:adni} reports all 20 methods and the full protocol.

\section{Conclusion}
\label{sec:conclusion}

This paper shows that enrollment shortcuts in medical multimodal data can be separated from disease evidence in a principled way, without observing the disease state. From a generative account in which cohorts differ in selection and measurement but share disease mechanisms, we showed that unobserved states can be aligned across cohorts, that enrollment-induced natural-parameter changes identify a minimal selection statistic, and that conditioning on it yields disease evidence identifiable without state labels and invariant to enrollment shifts within the identified family. The same analysis specifies what an estimator must preserve, namely cross-modal dependence and information not shared by all modalities, and explains why state evidence and endpoint prediction are related but distinct goals. \method{} turns these results into one jointly trained model, and experiments confirm the predictions: under reversed enrollment, conditioning on the estimated statistic lifts state recovery from below chance to near the oracle, unlike a random direction, and on cross-database ICU mortality \method{} improves AUROC, AUPRC and NLL over matched ERM in both directions.

The analysis also shows where the next gains lie. A reconstruction-trained lattice already retains the state information but does not yet express it in the common Gaussian coordinates our identification results assume; representations that do would extend the guarantees end to end. Selection statistics beyond rank one, a clinical anchor for the learned states, and more seeds on unseen targets would further strengthen the clinical findings, which remain exploratory (Appendix~\ref{app:scope}). More broadly, separating what enrollment contributes from what the disease contributes offers a principled route to multimodal models whose disease evidence remains valid when enrollment changes.

% IEEE appendices remain in the same two-column article.

\bibliographystyle{IEEEtranN}
\bibliography{references,references_added}

\clearpage
\newpage
\appendices
\section{Full statements and proofs}
\label{app:proofs}

Unless stated otherwise, Equation~\eqref{eq:mixture} holds in common coordinates $z\in\mathbb R^d$, with positive component weights, distinct components within each cohort, and positive-definite state-specific covariances $\Sigma_s$ shared across cohorts. We use $\Gamma$ for a basis of the selection subspace, reserving $B$ for background variables. All identification statements concern population laws. No claim of finite-sample recovery or convergence of the joint training objective is needed for these results.

\subsection{Finite mixtures and Proposition~\ref{prop:align}}
\label{app:align}

\begin{fullstatement}{Proposition~\ref{prop:align}}
The components of each source mixture are identified up to a permutation. If the shared covariances are pairwise distinct, the components can be matched across all source cohorts up to one global permutation. For the three-state common-covariance family in Equation~\eqref{eq:parallel}, linear independence of $a$ and $\nu$ and three distinct source values of $\kappa_e$ identify the central state and match the end states up to their global exchange.
\end{fullstatement}

\begin{proof}
We recall the finite-Gaussian-mixture argument~\citep{yakowitz1968identifiability}. If two minimal mixtures agree, combining identical components gives $\sum_{j=1}^J\alpha_j\mathcal N(z;\mu_j,\Sigma_j)=0$ for distinct parameter pairs. Choose $v$ such that the pairs $(v^\top\Sigma_jv,v^\top\mu_j)$ are all distinct. Such a $v$ exists because each pairwise failure is contained in a proper polynomial zero set. Taking moment-generating functions along $v$ yields
\[
\sum_{j=1}^J\alpha_j
\exp\!\left(\tfrac12 t^2v^\top\Sigma_jv+t v^\top\mu_j\right)=0.
\]
Order the pairs first by projected variance and then by projected mean. Divide by the exponential for the largest pair and let $t\to+\infty$. All other terms vanish, so its coefficient is zero. Repeating proves linear independence and hence identification of the unordered component parameters and weights.

When the shared covariances are distinct, a given covariance has exactly one match in each cohort. Matching by covariance leaves only one global ordering of the states unspecified.

For Equation~\eqref{eq:parallel}, the arithmetic mean of the three unordered component means is $a$, regardless of their mixture weights. Linear independence makes the component at $a$ unique. After subtracting $a$, the other means lie on
\[
L_+=a+\operatorname{span}(\nu),\qquad
L_-=-a+\operatorname{span}(\nu).
\]
These lines are distinct and parallel, and each contains at least three distinct observed points. Any line through three points in their union must contain two points from one of them, hence coincide with it; it cannot then intersect the other parallel line. The two lines and their component assignments are therefore determined, up to their exchange. Neither the values of $\kappa_e$ nor a source cohort with $\kappa_e=0$ need be known.
\end{proof}

\subsection{Proof of Theorem~\ref{thm:evidence} and its estimator correspondence}
\label{app:evidence}

\begin{fullstatement}{Theorem~\ref{thm:evidence}}
Under Equation~\eqref{eq:mixture} and an alignment condition of Proposition~\ref{prop:align}, the subspace $\mathcal U$ in Equation~\eqref{eq:B} is identified in the common coordinates. A linear statistic $A^\top Z$ determines every same-state source density ratio if and only if $\mathcal U\subseteq\operatorname{col}(A)$. Let $r=\dim\mathcal U$, choose a basis $\Gamma\in\mathbb R^{d\times r}$, and choose $C$ so that $Z\leftrightarrow(U,V)=(\Gamma^\top Z,C^\top Z)$ is invertible. Then $p_e(v\mid u,s)=h_s(v\mid u)$ is independent of the source cohort. The evidence in Equation~\eqref{eq:target} equals $\log h_s(v\mid u)-\log h_0(v\mid u)$. The reference posterior is identified up to the global state permutation, and the evidence with the corresponding baseline. The same conditional kernels apply to any Gaussian target with covariance $\Sigma_s$ in state $s$ and $\Sigma_s^{-1}(\mu_{ts}-\mu_{e_0s})\in\mathcal U$ for every state.
\end{fullstatement}

\begin{proof}
Define $\beta_{es}=\Sigma_s^{-1}(\mu_{es}-\mu_{e_0s})$. Expanding the Gaussian quadratic forms gives
\begin{equation}
\begin{aligned}
\frac{p_e(z\mid s)}{p_{e_0}(z\mid s)}
&=\exp\!\left(\beta_{es}^\top z-c_{es}\right),\\
c_{es}&=\tfrac12\left(
\mu_{es}^\top\Sigma_s^{-1}\mu_{es}
-\mu_{e_0s}^\top\Sigma_s^{-1}\mu_{e_0s}\right).
\end{aligned}
\label{eq:ratio}
\end{equation}
Aligned mixture parameters determine these vectors and their span. Changing the reference cohort does not change the span, because every difference between two cohort means is a difference of their contrasts with $e_0$.

Since $\beta_{es}\in\mathcal U$, the right-hand side of Equation~\eqref{eq:ratio} is a positive function $a_{es}(u)$. The common change-of-coordinates Jacobian cancels, so
\[
\begin{aligned}
p_e(u,v\mid s)&=a_{es}(u)p_{e_0}(u,v\mid s),\\
p_e(u\mid s)&=a_{es}(u)p_{e_0}(u\mid s).
\end{aligned}
\]
The second identity follows by integration over $v$. Dividing gives $p_e(v\mid u,s)=p_{e_0}(v\mid u,s)=h_s(v\mid u)$. Factoring the densities in $(u,v)$ then proves the expression for $T$. Equivalently, the conditional law of $Z$ is the same on each affine conditioning fibre.

For minimality, suppose a ratio is measurable with respect to $A^\top Z$ but its vector $\beta_{es}$ is outside $\operatorname{col}(A)$. After removing redundant columns of $A$, a nondegenerate Gaussian conditioned on $A^\top Z$ has full support on its affine fibre. There is $w\in\ker(A^\top)$ with $\beta_{es}^\top w\neq0$, so $\beta_{es}^\top Z$ has positive conditional variance and cannot be measurable with respect to $A^\top Z$. This contradiction proves necessity. Conversely, $\mathcal U\subseteq\operatorname{col}(A)$ makes every expression in Equation~\eqref{eq:ratio} a function of $A^\top Z$, proving sufficiency. This is minimality for the specified ratio family, not identification of a uniquely defined biological nuisance.

Replacing $\Gamma$ by $\Gamma H$, with invertible $H$, replaces $U$ by $H^\top U$. Its state density ratios are unchanged because their common Jacobian cancels. A state permutation $\tau$ gives
\[
\begin{aligned}
T'_s&=T_{\tau(s)}-T_{\tau(0)},\qquad
\pi'_s=\pi_{\tau(s)},\\
q_{\pi'}^{\mathrm{ref}\prime}(s\mid z)
&=\qref(\tau(s)\mid z).
\end{aligned}
\]
Thus the posterior is permuted, while the evidence is also recentered at its newly chosen baseline.

For a target satisfying (A4), Equation~\eqref{eq:ratio} holds with $e=t$ and $\beta_{ts}\in\mathcal U$, giving the same cancellation. This is an assumption on the target family, not a property established for arbitrary targets from source data. When $r=0$, $U$ is constant and no correction is required. When $r=d$, $V$ is empty, $T=0$, and $\qref=\pi$.
\end{proof}

\paragraph*{Centred reference family} The centred-span identity and the Gaussian expression in Equations~\eqref{eq:centered_span} and~\eqref{eq:gaussian_evidence} of Section~\ref{sec:method} implement the same conditional kernels. Each centred contrast is a linear combination of reference-cohort contrasts, and subtracting the centred contrast for the reference cohort proves the converse inclusion.

\paragraph*{Conditional sub-model evidence}
If background coordinates $B$ are available, remove linear redundancies from $(B,U)$ to obtain $W$, and choose $V_D$ so that $Z\leftrightarrow(W,V_D)$ is invertible. Since $W$ determines $U$, the same cancellation gives the cohort-invariant ratio
\begin{equation}
\begin{aligned}
T_s^{V_D\mid W}
&=\log\frac{p_e(V_D\mid W,s)}{p_e(V_D\mid W,0)}\\
&=\log\frac{p_e(V_D,W\mid s)}{p_e(V_D,W\mid0)}
-\log\frac{p_e(W\mid s)}{p_e(W\mid0)}.
\end{aligned}
\label{eq:subscm}
\end{equation}
Conditioning on the larger $W$ can remove additional evidence; this quantity need not equal the original $T_s$ unless the added coordinates contribute no further conditional state evidence. For a known acyclic graph, state-independent factors cancel from the joint ratio, but the marginal normalizer must still be retained. This identity does not identify the graph. In $D=aS+\gamma B+\epsilon$, for example, the conditional mechanism is expressed through $D-\gamma B$, so interpreting $D$ requires retaining $B$ as a covariate.

\subsection{Connection to the structural model and the rank-one setting}
\label{app:structural}

Let $M_s=I-A_s$ and consider the linear Gaussian instance
\[
Z=M_s^{-1}(o_s+\kappa_e\ell_s r_0+\epsilon_s),\qquad
\epsilon_s\sim\mathcal N(0,\Omega_s),
\]
where $M_s$ is invertible, $\Omega_s$ is positive diagonal, $r_0$ is a fixed enrollment direction, and only $\kappa_e$ varies across cohorts. Here $r_0$ denotes a vector, distinct from the rank $r=\dim\mathcal U$. Then $\mu_{es}=M_s^{-1}(o_s+\kappa_e\ell_s r_0)$ and $\Sigma_s=M_s^{-1}\Omega_sM_s^{-\top}$, giving
\begin{equation}
\Sigma_s^{-1}(\mu_{es}-\mu_{e_0s})
=(\kappa_e-\kappa_{e_0})\ell_s M_s^\top\Omega_s^{-1}r_0.
\label{eq:scm_natural_shift}
\end{equation}
Provided the source values of $\kappa_e$ vary, the exact subspace is
\begin{equation}
\mathcal U=\operatorname{span}\!\left\{
\ell_s M_s^\top\Omega_s^{-1}r_0:s=0,\ldots,K-1\right\}.
\label{eq:scm_selection_span}
\end{equation}
A single enrollment parameter therefore yields rank one only when its nonzero state-specific natural-parameter directions are collinear. State-dependent propagation and innovation variances can otherwise produce several directions.

One sufficient structural condition is that $r_0$ is supported on root background coordinates. Their rows in $A_s$ are zero; because $\Omega_s$ is diagonal, $A_s^\top\Omega_s^{-1}r_0=0$, so the direction reduces to $\Omega_s^{-1}r_0$. If the innovation variances are equal over the selected roots within each state, this vector is proportional to $r_0$. With at least one nonzero $\ell_s$ and a nonzero source contrast, $\mathcal U=\operatorname{span}(r_0)$. The variances may differ between states. Selection on a single root coordinate is a special case.

More generally, suppose enrollment changes root-background means, all such natural contrasts remain in the background coordinate subspace, and the source contrasts span that entire subspace. Then $U$ is an invertible linear function of $B$ and
\[
T_s(z)=\log\frac{p(d\mid b,s)}{p(d\mid b,0)}.
\]
A sufficient excitation condition is that, in at least one state, the root-background mean contrasts span all background coordinates; the relevant diagonal covariance scaling is invertible. Without full coverage, $U$ need not recover all of $B$, and the interpretation as the entire conditional physiological mechanism is not justified. The theorem still applies to its identified subspace and targets satisfying (A4).

\subsection{Common transformations and nonlinear limits}
\label{app:transform}

For a common affine change $Z'=LZ+\tau$ with invertible $L$,
\[
\begin{aligned}
\mu'_{es}&=L\mu_{es}+\tau,&\Sigma'_s&=L\Sigma_sL^\top,\\
\beta'_{es}&=L^{-\top}\beta_{es},&\mathcal U'&=L^{-\top}\mathcal U.
\end{aligned}
\]
Choosing $\Gamma'=L^{-\top}\Gamma$ gives $U'=U+\Gamma^\top L^{-1}\tau$. Common Jacobians and the common translation of $U$ cancel from the state density ratios. Distinct-covariance matching and the centred parallel-line geometry are preserved; the zero-offset parametrization in Equation~\eqref{eq:parallel} need not remain literal after translation. Truncated Euclidean singular vectors and a fixed numerical ridge need not have this exact affine equivariance.

For a common diffeomorphism $F$, the transformed density is
\begin{equation}
p^F_e(F(z)\mid s)
=\frac{p_e(z\mid s)}{|\det J_F(z)|},\qquad
U^F=U\circ F^{-1}.
\label{eq:transported_evidence}
\end{equation}
The Jacobian cancels from every state likelihood ratio, while $U^F(F(Z))=U(Z)$ has exactly the same conditional distribution. Both terms of Equation~\eqref{eq:target}, hence $T$ and the reference posterior, are unchanged. A nonlinear $F$ need not preserve Gaussianity or make $U^F$ linear. Evidence preservation is therefore weaker than the coordinate condition required by the Gaussian estimator.

Likewise, an invertible calibration cannot recreate information already lost by an encoder: a measurable bijection with measurable inverse preserves the generated sigma-algebra and unrestricted Bayes risks. It may improve a restricted readout by changing the coordinates in which that readout operates. Gaussian-mixture identifiability alone does not establish identifiability of an unknown nonlinear observation map. Without restrictions on the component family, any density $p(z)$ admits latent explanations
\[
\pi_s=\int p(z)w_s(z)\,dz,\qquad
p(z\mid s)=\frac{p(z)w_s(z)}{\pi_s},
\]
for arbitrary positive functions $w_s$ summing to one. These explanations have the same observed law and different state posteriors. Additional views that only invertibly re-express the same $Z$ do not remove this ambiguity.

\subsection{Reference posterior and the exact target discrepancy}
\label{app:gap}

Let $\pi$ be a positive reference prior. For any probability law $\nu_U$ of $U$, define the reference population
\begin{equation}
P^{\mathrm{ref}}_{\nu_U}(S=s,du,dv)
=\pi_s\nu_U(du)h_s(v\mid u)\,dv.
\label{eq:reference_population}
\end{equation}
Here $S$ and $U$ are independent before $V$ is drawn. Bayes' rule gives
\[
\qref(s\mid u,v)
=\frac{\pi_sh_s(v\mid u)}{\sum_j\pi_jh_j(v\mid u)},
\]
independently of the chosen $\nu_U$. This defines the reference posterior even when no observed cohort has this sampling law.

For a target $e$ with these conditional kernels, write
\[
\begin{aligned}
g_s(u)&=P_e(S=s\mid U=u),\\
m_g(v\mid u)&=\sum_sg_s(u)h_s(v\mid u),\\
m_\pi(v\mid u)&=\sum_s\pi_sh_s(v\mid u).
\end{aligned}
\]
Define $R^S_e(q)=\mathbb E_e[-\log q(S\mid Z)]$ and assume the displayed expectations are finite. At fixed $u$, the KL divergence between the joint laws $g_s(u)h_s(v\mid u)$ and $\pi_sh_s(v\mid u)$ is $\mathrm{KL}(g(u)\|\pi)$. Factoring instead in the order $V$ then $S$ gives
\begin{equation}
\begin{aligned}
\Delta^S_e
&:=R^S_e(\qref)-R^S_e(p_e)\\
&=\mathbb E_{U,V}\,\mathrm{KL}\!\left(
p_e(\cdot\mid U,V)\,\|\,\qref(\cdot\mid U,V)\right)\\
&=\mathbb E_U\!\left[
\mathrm{KL}(g(U)\|\pi)
-\mathrm{KL}(m_g(\cdot\mid U)\|m_\pi(\cdot\mid U))
\right].
\end{aligned}
\label{eq:gap_exact}
\end{equation}
Nonnegativity of the second KL term proves Equation~\eqref{eq:gap}. Also,
\begin{equation}
\mathbb E_U\mathrm{KL}(g(U)\|\pi)
=I_e(S;U)+\mathrm{KL}(P_e(S)\|\pi).
\label{eq:gap_prior_decomposition}
\end{equation}
This separates state--selection dependence from prevalence mismatch. If $g(U)=\pi$ almost surely, both the bound and the actual excess risk vanish. Otherwise the reference posterior can differ from the target posterior. The actual gap is zero precisely when the two posteriors agree almost surely; in the positive Gaussian family this requires $g(U)=\pi$ almost surely. In particular, switching off an enrollment parameter is not by itself enough: it must also give state-independent $U$ and the chosen reference prevalence. Neither the bound nor its decomposition proves monotonicity of the actual excess risk along an arbitrary enrollment path.

The construction extends to any positive reference gate $a_s(u)$ by replacing $\pi_s$ with $a_s(u)$ in the reference posterior and the KL expressions. Comparing two such upper bounds alone does not order their actual risks.

\subsection{Proof of Proposition~\ref{prop:decomp} and the continuous-information limit}
\label{app:decomp}

\paragraph*{Definition on observed inputs}
For the exact link to Equation~\eqref{eq:gap}, require a measurable recovery map $Z=\Phi(X)$ in the fixed acquisition context and the observation-model condition $S\perp X\mid Z$. The latter follows from the factorization in Section~\ref{sec:setting}, where the measurement law depends on $Z$ but not additionally on $S$. Then
\[
\begin{aligned}
p^X_e(X)&=p_e(\cdot\mid Z),\\
q^{X,\mathrm{ref}}_\pi(X)&=\qref(\cdot\mid Z)
\quad\text{almost surely}.
\end{aligned}
\]
Without recoverability, the pointwise identification of these objects is not available; a posterior on raw inputs requires marginalizing over the unobserved coordinates.

\begin{proof}[Proof of Proposition~\ref{prop:decomp}]
Suppress the argument $x$. Insert $h(\hat q,\hat r)$ and $h(p^X_e,r_e)$ between $\hat f$ and $\eta_e$. By the definition of $\epsilon_h$, the Lipschitz property, and the residual bound,
\[
|\hat f-\eta_e|
\leq\epsilon_h+L_S\|\hat q-p^X_e\|_1
+L_R\|\hat r-r_e\|+d_e.
\]
Inserting $q^{X,\mathrm{ref}}_\pi$ into the state difference and applying the triangle inequality gives $A_e$. For $R^Y_e(f)=\mathbb E_e[(f(X)-Y)^2]$, conditioning on $X$ yields
\[
R^Y_e(\hat f)-R^Y_e(\eta_e)
=\mathbb E_e[(\hat f(X)-\eta_e(X))^2]
\leq\mathbb E_e[A_e(X)^2].
\]
This uses the scalar binary Brier convention; summing squared errors over both class probabilities multiplies both sides by two.
\end{proof}

Under the recoverability and measurement conditions above, Pinsker's inequality and Equation~\eqref{eq:gap_exact} give the precise connection between state log-loss and the third term:
\begin{equation}
\begin{aligned}
\mathbb E_e\|p^X_e(X)-q^{X,\mathrm{ref}}_\pi(X)\|_1^2
&\leq 2\,\Delta^S_e\\
&\leq 2\,\mathbb E_U\mathrm{KL}(g(U)\|\pi).
\end{aligned}
\label{eq:observable_gap}
\end{equation}
The logarithms are natural. The log-loss gap and the posterior norm are distinct quantities; Equation~\eqref{eq:observable_gap} connects them under the stated conditions.

\paragraph*{Correspondence with the jointly trained head}
Equation~\eqref{eq:joint_outcome} uses log odds rather than probabilities as numerical inputs. Its composite probability-space head is
\[
\begin{aligned}
\hat h(q,r)&=h_\theta\!\left(r,
\left\{\log\frac{q_s}{q_0}\right\}_{s=1}^{K-1}\right),\\
\log\frac{\hat q_\pi(s\mid\hat z)}{\hat q_\pi(0\mid\hat z)}
&=\log\frac{\pi_s}{\pi_0}+\hat T_s(\hat z).
\end{aligned}
\]
The positive Gaussian densities and positive prior ensure these odds are defined. This substitution puts the method in the form of Proposition~\ref{prop:decomp}. Lipschitz continuity is assumed for the comparison map $h$; it is not inferred merely from implementing a neural head on log odds. The term $\epsilon_h$ measures approximation of that comparison map at the learned inputs. Joint updates can change all estimated quantities, so the decomposition does not assert that training affects only one term. The residual $d_e$ includes label-protocol drift and any failure of $(p^X_e,r_e)$ to describe the endpoint through a common $h$.

\paragraph*{What a state-only outcome head can lose}
Let $Q=q^{X,\mathrm{ref}}_\pi(X)$ be the only input to an unrestricted predictor. Since $Q$ is measurable with respect to $X$,
\begin{equation}
\inf_f R^Y_e(f(Q))-R^Y_e(\eta_e)
=\mathbb E_e\operatorname{Var}(\eta_e(X)\mid Q).
\label{eq:state_only_limit}
\end{equation}
The minimizer is $f(Q)=\mathbb E_e[\eta_e(X)\mid Q]$. Adding a deterministic representation $R=r(X)$ reduces this oracle risk by
\[
\mathbb E_e\!\left[
\left(\mathbb E_e[\eta_e(X)\mid Q,R]
-\mathbb E_e[\eta_e(X)\mid Q]\right)^2\right]\geq0,
\]
by orthogonality of nested conditional expectations. This is a within-target information statement; source-trained heads need not attain either oracle. It explains why retaining continuous physiology can matter even with exact reference evidence.

\subsection{Interaction example and averaging}
\label{app:interaction}

Section~\ref{sec:theory} uses the following example to show that reducing each modality to its own state posterior before fusion can discard all state evidence.

\begin{proposition}[Marginal-silent interaction]\label{prop:interaction}
Let two zero-mean, unit-variance Gaussian views have correlation $+\rho$ in one state and $-\rho$ in the other, where $0<\rho<1$, with equal state priors. Both single-view posteriors are constant, so every classifier based only on those posteriors has error $1/2$. The joint log-likelihood ratio, with the positive-correlation state in the numerator, is $2\rho x_1x_2/(1-\rho^2)$, and the joint Bayes error is $\arccos(\rho)/\pi$.
\end{proposition}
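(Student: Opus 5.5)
The proof has three parts: the single-view marginals, the joint log-likelihood ratio, and the Bayes error, which reduces to a Gaussian orthant probability.

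\textbf{Single views.} Write the state-conditional laws as $\mathcal N(0,\Sigma_\pm)$ with $\Sigma_\pm=\begin{pmatrix}1&\pm\rho\\ \pm\rho&1\end{pmatrix}$. Each coordinate marginal is $\mathcal N(0,1)$ in both states, so each single-view likelihood ratio is identically one. With equal priors, each single-view posterior is then the constant $1/2$. Any classifier that reads only these two posterior values is therefore a (possibly randomized) function independent of the inputs, and hence independent of $S$. Its error is $\tfrac12 P(\hat S\neq +)+\tfrac12 P(\hat S\neq -)=1/2$.

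\textbf{Joint ratio.} Both covariances have determinant $1-\rho^2$, so the normalizing constants cancel. Their inverses are $\Sigma_\pm^{-1}=(1-\rho^2)^{-1}\begin{pmatrix}1&\mp\rho\\ \mp\rho&1\end{pmatrix}$. The quadratic forms differ only in the cross term:
\[
-\tfrac12 x^\top(\Sigma_+^{-1}-\Sigma_-^{-1})x=\frac{2\rho x_1x_2}{1-\rho^2}.
\]
This gives the stated log-likelihood ratio. Since $\rho>0$, the Bayes rule under equal priors is $\operatorname{sign}(X_1X_2)$.

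\textbf{Bayes error.} By the symmetry $(x_1,x_2)\mapsto(x_1,-x_2)$, which maps $\mathcal N(0,\Sigma_+)$ to $\mathcal N(0,\Sigma_-)$ and flips the sign of $X_1X_2$, both conditional errors are equal. It therefore suffices to compute $P_+(X_1X_2<0)=1-2P_+(X_1>0,X_2>0)$. This orthant probability is the only step needing a nontrivial fact, namely Sheppard's formula $P(X_1>0,X_2>0)=\tfrac14+\arcsin(\rho)/(2\pi)$.

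To prove it, I would write $X_1=\langle g,a\rangle$ and $X_2=\langle g,b\rangle$, where $g$ is a standard 2D Gaussian and $a,b$ are unit vectors at angle $\theta=\arccos\rho$. By rotational invariance of $g$, the event $\{X_1X_2<0\}$ occurs exactly when the random line orthogonal to $g$ separates $a$ and $b$, which has probability $\theta/\pi$. Hence the Bayes error is $\tfrac12\cdot\tfrac{\theta}{\pi}+\tfrac12\cdot\tfrac{\theta}{\pi}=\arccos(\rho)/\pi$.

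For $\rho=0.8$ this gives $\approx0.20483$, matching Section~\ref{sec:examples}.
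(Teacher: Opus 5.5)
Your proof is correct and follows the paper's outline. Equal standard-normal marginals make the single-view posteriors constant, equal determinants reduce the log-likelihood ratio to the cross term of the inverse quadratic forms, and reflecting one view transfers the conditional error between states.

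The two arguments differ only in the nontrivial step, the computation of $P_+(X_1X_2<0)$.

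\textbf{The paper's route.} It passes to the principal axes of $\Sigma_+$, namely $A=(X_1+X_2)/\sqrt2$ and $B_*=(X_1-X_2)/\sqrt2$. These are independent with variances $1+\rho$ and $1-\rho$. Since $X_1X_2=(A^2-B_*^2)/2$, the error event becomes $|G_1/G_2|<\sqrt{(1-\rho)/(1+\rho)}$ for a standard Cauchy ratio. A half-angle identity then converts $\tfrac{2}{\pi}\arctan\sqrt{(1-\rho)/(1+\rho)}$ into $\arccos(\rho)/\pi$.

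\textbf{Your route.} You write the pair as projections of one isotropic Gaussian onto unit vectors with inner product $\rho$. You then read the error as the probability that a uniformly oriented line separates the two vectors.

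\textbf{Comparison.} Your route is shorter and needs neither the Cauchy law nor a trigonometric identity; it also produces $\arccos\rho$ directly as an angle. The paper's route keeps the computation in the coordinates where the Bayes statistic $x_1x_2$ is a difference of two independent variance components, which ties the error to the structure of the decision rule.

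Your geometric argument computes $P_+(X_1X_2<0)=\theta/\pi$ directly. Invoking Sheppard's formula via $1-2P_+(X_1>0,X_2>0)$ is therefore unnecessary: the formula follows from your argument rather than being an input to it.
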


\begin{proof}[Proof of Proposition~\ref{prop:interaction}]
Let the two zero-mean state covariances be
\[
\Sigma_+=\begin{pmatrix}1&\rho\\\rho&1\end{pmatrix},\qquad
\Sigma_-=\begin{pmatrix}1&-\rho\\-\rho&1\end{pmatrix},\qquad 0<\rho<1.
\]
Both single-view laws are $\mathcal N(0,1)$ in both states. With equal priors, each single-view posterior is therefore the constant vector $(1/2,1/2)$, and a classifier receiving only these vectors has error $1/2$.

The determinants coincide and expanding the inverse-covariance quadratic forms gives
\[
\log\frac{p_+(x_1,x_2)}{p_-(x_1,x_2)}
=\frac{2\rho x_1x_2}{1-\rho^2}.
\]
The joint Bayes classifier chooses the positive-correlation state when $x_1x_2>0$. Under that state, set $A=(X_1+X_2)/\sqrt2$ and $B_*=(X_1-X_2)/\sqrt2$. These variables are independent Gaussians with variances $1+\rho$ and $1-\rho$. For independent standard normals $G_1,G_2$, the error probability is
\[
\begin{aligned}
P_+(X_1X_2<0)
&=P\!\left(\left|\frac{G_1}{G_2}\right|
<\sqrt{\frac{1-\rho}{1+\rho}}\right)\\
&=\frac{2}{\pi}\arctan\sqrt{\frac{1-\rho}{1+\rho}}
=\frac{\arccos\rho}{\pi}.
\end{aligned}
\]
Here $G_1/G_2$ is standard Cauchy. Reflection of one view gives the same error under the other state, proving the stated Bayes error. The limitation concerns fusion of the marginal posterior outputs, not every classifier that receives the original views.
\end{proof}

For completeness, averaging probabilistic predictions obeys a different, purely convexity-based statement. For weights $\alpha_m\geq0$ summing to one, the log loss of $\bar q=\sum_m\alpha_m q_m$ satisfies
\[
-\log\bar q(Y\mid X)\leq\sum_m\alpha_m[-\log q_m(Y\mid X)],
\]
and squared loss satisfies the analogous inequality. These bounds compare an average with the weighted average of its constituent losses; they do not guarantee improvement over the best constituent and cannot recover information absent from every constituent output.

\subsection{Evidence-preserving representations: proof of Proposition~\ref{prop:lattice}}
\label{app:lattice}

\paragraph*{Shared content and exact-support blocks}
Include private variation in $\tilde Z=(Z,\xi_{1:M})$ and let $\tilde A_m$ index the coordinates observed by modality $m$. For a nonempty modality subset $a$, define
\[
\begin{aligned}
I(a)&=\bigcap_{m\in a}\tilde A_m,\qquad
C(a)=\bigcup_{a'\supsetneq a}I(a'),\\
J(a)&=I(a)\setminus C(a).
\end{aligned}
\]
The union is empty when $a=[M]$. The set $I(a)$ is all content shared by the views in $a$; $J(a)$ contains only coordinates observed by exactly those views. These objects must not be conflated.

For the positive shared-content result we invoke \citet[Thm.~3.2]{yao2024multiview} with its full setup: smooth invertible view maps, a smooth positive latent density on the stipulated product domain, the correct shared-content dimension, smooth encoders into a unit cube of that dimension, and a global optimum of the population alignment-minus-entropy objective. The conclusion identifies $\tilde Z_{I(a)}$ up to a smooth invertible map. The result does not by itself separate $J(a)$ from $C(a)$, or prove identification for a finite-sample surrogate.

\begin{proof}[Proof of the remaining claims in Proposition~\ref{prop:lattice}]
To see the conditional ambiguity, choose a smooth change
\[
(\tilde z_{J(a)},\tilde z_{C(a)})
\longmapsto
(\psi(\tilde z_{J(a)};\tilde z_{C(a)}),\tilde z_{C(a)}),
\]
whose full extension $\Psi$ is a diffeomorphism on the latent domain; all other coordinates are unchanged. Such transformations include blockwise diffeomorphisms and admissible conditional triangular changes. If $m\notin a$, it observes no coordinate in $J(a)$ and its inputs are unchanged. If $m\in a$, it observes all coordinates in $J(a)\cup C(a)$ and its view map can be composed with the corresponding restriction of $\Psi^{-1}$. The transformed model has exactly the same observed views. Thus view geometry alone cannot rule out this conditional reparameterization. This is an ambiguity of the unrestricted smooth view model; additional distributional restrictions can reduce it.

For evidence invariance, Equation~\eqref{eq:transported_evidence} shows that the state-density Jacobian cancels and that the transported statistic is the same random variable. Hence both likelihood-ratio terms in Equation~\eqref{eq:target} are unchanged, proving part (ii)'s transformation claim.

For reconstruction, the inverse view maps and coverage of all coordinates give a measurable map $Z=\Phi(X)$ at the fixed acquisition context. Let $R=E(X)$ and suppose a decoder $G$ satisfies $G(E(X))=X$ almost surely on the population under consideration. Then
\[
\begin{aligned}
Z&=\Phi(G(R)),\\
T(Z)&=(T\circ\Phi\circ G)(R),\\
\qref(\cdot\mid Z)&=\qref(\cdot\mid\Phi(G(R))).
\end{aligned}
\]
The evidence and posterior are therefore measurable functions of $R$. For a claim on a target population, the reconstruction identity must hold there as well; reconstruction of finitely many training examples is insufficient.

Finally, let $S\in\{0,1\}$, let $Z_{\mathrm{sh}}\sim\mathcal N(0,1)$ be independent of $S$, and let $Z_{12}=S+\epsilon$ with independent $\epsilon\sim\mathcal N(0,1)$. Take three views with identity maps: the first two observe $(Z_{\mathrm{sh}},Z_{12})$ and the third observes only $Z_{\mathrm{sh}}$. All cohorts have the same state-conditional laws, so $U$ is trivial and $T_1(Z)=Z_{12}-1/2$. This nonconstant evidence cannot be recovered from the content $Z_{\mathrm{sh}}$ shared by all three views. The example isolates preservation of evidence; it does not require solving a cross-cohort component alignment problem.
\end{proof}

The information objective in Equation~\eqref{eq:lattice} is motivated by these distinctions. Alignment encourages common content, reconstruction encourages retaining the rest, and residual terms organize partially shared information after broadly shared blocks. Ordinary subtraction in observation space does not establish a unique conditional block decomposition under arbitrary nonlinear view maps. Exact reconstruction is sufficient, but not necessary, to preserve a particular $T$; a lossy statistic could retain that evidence alone. Conversely, an injective encoder may preserve all information while presenting it in coordinates that violate Equation~\eqref{eq:mixture}. These limits distinguish information preservation, coordinate identification, and the behavior of the joint estimator in Equation~\eqref{eq:discern_joint_objective}.

\section{Synthetic benchmark: protocol and supplementary results}
\label{app:synthetic}

\subsection{Evidence levels and comparability}
The common-coordinate study supplies the generator's ten-dimensional $Z$ to all compared state estimators. This removes representation error but does not reveal $S$ during fitting; it is a mechanism-stage experiment, not an end-to-end raw-input result. The learned-view study uses the original multimodal observations and an estimated 13-dimensional lattice. The clinical study is a separate, label-informed architecture with 38 physiology dimensions. These three studies must not be pooled into one effect estimate or presented as runs of an identical estimator.

The five common-coordinate seeds are 58013, 58027, 58043, 58061 and 58073; the learned-view study uses the first three in both regimes, retaining the paired generated studies; the attribution study uses five separately generated studies with seeds 31415, 27182, 16180, 14142 and 17320. A seed in simulation is one generated study with multiple source and target observations. Tables report sample standard deviations across seeds, not standard errors, and small numbers of seeds are not interpreted as significance tests. All four target laws are reported: stronger selection is informative because a method can benefit from selection under one law and fail under its reversal. Equality of an evidence function across cohorts does not imply exactly equal marginal AUROC, because the distribution of conditioning covariates and the label mechanism can change task difficulty; we therefore do not impose a flat-AUROC validity criterion.

\subsection{Generator and source fitting}
The generator has three latent states, ten factors and three leaky-affine observation views with private variation. Source enrollment acts on the generator's selection-related root factors, which play the role of $B$ in Figure~\ref{fig:motivation}a, and is strong and heterogeneous across the twelve cohorts; label styles, the measurement of $Y$, are heterogeneous. Target-law generation changes enrollment as stated in Section~\ref{sec:experiments} together with site and label style under the original law definitions. The recorded binary $Y$ uses a score involving continuous latent factors (including $z_0$ and $0.7z_2$), cohort-dependent terms that do not involve $S$, private variation and site effects, which together play the role of $c_Y(e)$ and the label noise in Equation~\eqref{eq:cohort}; it is not replaced with a label generated only from $S$, because a clean state task and an observed outcome task test different claims.

The state mixture uses $K=3$, at most 150 EM iterations, two restarts and covariance ridge $10^{-3}$. Source-validation density selects common versus state-specific covariance. Common-covariance states are aligned by the parallel-line estimator of Equation~\eqref{eq:parallel}, with a check that the permutations preserve the observation density; distinct state covariances provide the alternative alignment of Proposition~\ref{prop:align}. Finite-sample EM and geometric fitting approximate the population construction, and the generator is not claimed to satisfy Equation~\eqref{eq:parallel} exactly. The natural-shift approximation keeps one direction. For averaging, the original fit and two within-cohort bootstrap fits are matched through source-validation posterior agreement and combined with equal weights. True $S$ is excluded from all these choices; the single global source-$S$ scoring permutation is determined after state models and outcome heads are fitted.

\subsection{Outcome heads on common coordinates}
Outcome heads have twelve inputs (ten coordinates and two state-evidence slots), two hidden layers of width 64 and a binary output. The ERM control fills the evidence slots with zero; state-only ablations zero the continuous coordinates. AdamW uses learning rate $10^{-3}$, weight decay $10^{-4}$, batch size 512, gradient clipping at 5, at most 40 epochs and patience 6 on source cohort-average NLL. The style variants add a centred cohort-specific linear logit and intercept, initialized at zero; deployment always uses the shared head. No weight averaging is used in these heads. Table~\ref{tab:outcome} gives the proper scores under reversal. For state recovery under reversal, single corrected-fit state NLL is $0.787$ and $0.891$ (additive, with interactions), and averaged corrected-fit NLL is $0.781$ and $0.883$; these are modest improvements, distinct from the large correction-versus-uncorrected contrast of Table~\ref{tab:state}. A joint criterion, requiring both that state averaging lowers state NLL relative to the single corrected fit and that the averaged-evidence-plus-style head lowers outcome NLL relative to ERM, holds in 1/5 additive and 2/5 interaction studies.

\subsection{Attribution study}
Figure~\ref{fig:attrib} tests whether the correction comes from the estimated selection direction rather than from conditioning as such. Five separately generated studies use the same twelve-source sample budgets, $K=3$, rank-one correction, covariance-family selection and four target laws. Within each study a single fitted mixture supplies all three readouts of Figure~\ref{fig:attrib}; the random unit direction is fixed by the study seed plus 99013. Source $S$ supplies only the final scoring permutation. Reference KL is $\mathbb E_Z\,\mathrm{KL}(q^{\mathrm{ref}}(\cdot\mid Z)\|\hat q(\cdot\mid Z))$ averaged over target observations, not actual-target excess log-loss. The aggregate file retains all 240 planned method--regime--law--seed rows, including oracle and linear-outcome diagnostics. These data are separate from the five averaged-fit studies and are not pooled with them. Their absolute levels are therefore not comparable with Table~\ref{tab:state}: with interactions, the estimated direction reaches $0.861$ here against $0.825\pm0.067$ for the averaged fit there, a difference within one between-study SD that reflects different generated studies rather than an effect of averaging. Within the studies of Table~\ref{tab:state}, averaging does not lower performance: reversed-law state NLL is $0.891$ for the single corrected fit and $0.883$ for the average (Appendix~\ref{app:synthetic}, outcome heads). Per study, the estimated direction exceeds both other readouts in all five studies of each regime; across studies, its state AUROC is $0.860\pm0.010$ and $0.861\pm0.033$, against $0.151\pm0.242$ and $0.270\pm0.178$ for the random direction and $0.115\pm0.188$ and $0.252\pm0.174$ uncorrected. The attribution study is designed for the within-mixture comparison of the three readouts, not for absolute levels.

\subsection{Learned representations, calibration and diagnostic probes}
\label{app:learned}
The lattice has atom dimensions $(2,1,1,1,3,2,3)$ with hidden width 32. Per-modality standardization uses source training statistics; site FiLM context is zero in both source and target, so the lattice receives no information about the target's site change, which Proposition~\ref{prop:lattice} holds fixed. The information objective is Equation~\eqref{eq:lattice}: within-cohort contrastive alignment, reconstruction, conditional-residual reconstruction and $0.1$ times a variance-floor penalty; AdamW with learning rate $10^{-3}$, weight decay $10^{-4}$, batch size 256, at most 30 epochs and patience 5 on source-validation information loss. No $S$ or $Y$ supervises the encoder.

For the calibration study we freeze the learned features $\tilde z$ and fit a shared, blockwise invertible map $F$ jointly with the mixture using the Jacobian-normalized feature likelihood
\begin{equation}
\begin{aligned}
\log p_e(\tilde z)
&=\log\sum_s\pi_{es}\,\mathcal N\big(F(\tilde z);\mu_{es},\Sigma_s\big)\\
&\quad+\log|\det J_F(\tilde z)|,
\end{aligned}
\label{eq:calib}
\end{equation}
so that $z=F(\tilde z)$; the Jacobian normalizes the feature density but cancels from posterior ratios. Calibration first standardizes features with the base mixture's source mean and scale. Three block-diagonal LU layers have positive diagonal factors and positive-slope piecewise-affine activations; blocks never mix different information atoms, and shifted kinks are initialized with unit slopes for an exact identity initialization. The affine control fixes activation slopes to one; the mixture-only control fixes the entire transform. All refinements start from the same fitted density; Adam with learning rate $10^{-3}$, batch size 512, clipping at 5, at most 40 epochs and patience 6 on source-validation feature NLL, with the initial model included as a candidate. The outcome head recipe matches the common-coordinate heads; all lattice-based heads have the same input width, and a calibrated-coordinate ERM control measures a coordinate change without added state evidence. Raw-concatenated-input ERM with the same hidden widths reaches reversed-selection outcome AUROC $0.792$ and $0.812$. Table~\ref{tab:learned} gives the learned-view results.

Nonlinear calibration improves source-validation feature-density NLL from $5.240$ to $4.233$ (additive) and from $5.925$ to $4.939$ (with interactions), while state NLL under reversal ($12.72\pm4.83$ base versus $12.89\pm4.93$ calibrated; $7.62\pm2.49$ versus $7.79\pm1.90$) does not improve; no seed has both primary NLL endpoints improved. Fitting the feature density more accurately is insufficient to identify the intended state family. No \method{} variant improves outcome AUROC over Lattice + ERM; affine calibration gives the lowest additive outcome NLL ($0.495$ against $0.507$), within one SD.

Diagnostic probes are fitted only after deployed models, checkpoints and predictions are saved. Linear multinomial probes select $C\in\{0.1,1,10\}$ by their own validation NLL; neural probes have two width-64 hidden layers, at most 30 epochs and patience 6. Source-trained probes use source $S$ and evaluate all target laws; on the frozen lattice they obtain source-validation AUROC $0.934$ and $0.889$ but reversed-target AUROC $0.173$ and $0.652$, so source-trained probes can themselves fail under reversal. Target-trained probes use 5{,}000 independent observations from a reserved target-adaptation stream (4{,}000 training, 1{,}000 validation) and evaluate the existing 10{,}000 target-test samples; they obtain $0.868$ and $0.832$ on the selection-off law and $0.993$ and $0.958$ on the reversed law, where the high values can exploit target-specific selection. Oracle-$Z$ selection-off values are $0.903$ and $0.859$. These results support residual state readability, not information-theoretic sufficiency or unsupervised disease identification; true target labels never initialize, select or update a deployed method.

Numerical checks verified inverse round trips, analytic versus automatic-differentiation Jacobians, atom block structure, identity initialization and one-dimensional density normalization. The main run completed six cases, yielding 240 state-metric rows, 240 outcome rows and 432 diagnostic-probe rows; thirty previous-baseline replay comparisons passed; 29 frozen source and protocol file hashes and 162 saved checkpoint hashes were verified, and diagnostic probes left deployed checkpoints unchanged.

\section{ICU experiment: protocol and supplementary results}
\label{app:icu}

\subsection{Cohort accounting}
Both directions preserve a fixed patient split and feature contract: 48-hour observations, a 6-hour gap and same-admission hospital mortality. The eICU-to-MIMIC target has 9{,}922 stays from 9{,}922 patients at one physical hospital; its 23 care-unit and time groups are not 23 hospitals. The MIMIC-to-eICU target has 26{,}763 stays from 22{,}982 patients at 57 physical hospitals. The saved configurations identify MIMIC-IV~2.0 and eICU-CRD~2.0. Eligibility requires age at least 18 and ICU length of stay from 54 to 240 hours, with a half-hour rounding tolerance. The retained source caches contain 18{,}419 MIMIC stays and 58{,}391 eICU stays; these are not the target-test counts. eICU preprocessing removes 11 stays with unknown sex and 869 stays to resolve cross-hospital patient duplicates; where cross-hospital time ordering is unavailable, the patient-to-hospital assignment uses the lexicographically smallest stay key. Training and validation use fixed patient-disjoint role plans (Table~\ref{tab:roles}); target adaptation partitions are held separately and unused in this round. The cache enforces a 48-hour event-time window, but prospective record availability and chunk-order Glasgow Coma Scale reconstruction have not been verified, and the length-of-stay eligibility limits applicability beyond this population.

\begin{table}[H]
\centering\small
\caption{Source role sizes in the fixed clinical patient plans (ICU stays). Validation excludes separately reserved adaptation data; target cohorts are the same across seeds.}
\label{tab:roles}
\begin{tabular}{llrr}
\toprule
Direction & Seed & Source train & Source validation\\
\midrule
E$\to$M & 42 & 18{,}555 & 6{,}039\\
E$\to$M & 3407 & 21{,}952 & 4{,}089\\
E$\to$M & 2026 & 20{,}461 & 4{,}679\\
M$\to$E & 42 & 7{,}692 & 1{,}944\\
M$\to$E & 3407 & 7{,}063 & 2{,}176\\
M$\to$E & 2026 & 8{,}509 & 1{,}691\\
\bottomrule
\end{tabular}
\end{table}

\subsection{Architecture and training}
The physiology views contain the first eleven demographic inputs, bedside measurements and laboratory features, excluding the creatinine, lactate, bilirubin and culture-site families reserved for construct checks. Their seven information blocks have dimensions $(4,2,2,2,4,8,16)$, giving 38 continuous coordinates. A separate context path receives the excluded laboratory columns, remaining static features, medications and procedures, with widths 128 and 64. The mortality head reads physiology and context through a width-128 layer; the mechanism branch adds a linear readout of two standardized state log-odds coordinates, initially weighted zero. All prediction inputs are available to every rich variant.

Five rich variants share the backbone: matched ERM; information only, the lattice without a state branch; uncorrected state, state evidence without conditioning; conditional state (\method{}); and no edges, the block-ordered mechanism with its structural edges removed. They share a five-epoch supervised warm-up with AdamW at $10^{-3}$, then independent optimizers at $3\times10^{-4}$, weight decay $10^{-4}$, batch size 256, clipping at 5 and at most 30 epochs, with patience 6 on source-validation mean group AUROC and a $10^{-4}$ improvement threshold; the warm-up checkpoint is an explicit candidate, and a fixed five-epoch averaging window around the best epoch is considered with source-only batch-norm recalculation and source validation choosing raw versus averaged weights (a local weight-averaging option, not canonical SWAD). The information objective has weight $0.01$ and the mechanism density regularizer $0.001$. The mechanism initializes with $K=3$, two EM restarts, 100 iterations and ridge $0.01$; the block order is imposed. State covariance parameters, encoder, context and outcome head update together; the selection basis is recomputed once per epoch outside gradient tracking. The feature density is a regularizer without a raw-input Jacobian, not an identified generative likelihood of clinical observations. The deployed mechanism subtracts the rank-one marginal evidence, and the source structure audit retains all 38 coordinates in all six fits. Because gradients from the supervised endpoint reach the physiology encoder, the clinical state model is label-informed through its encoder even though its initialization fit excludes $Y$; the design does not establish label-free state identification.

\subsection{Matched ablations}
Table~\ref{tab:icuabl} isolates the mechanism choices on the shared backbone. Sections~\ref{sec:exp-nfactors} and~\ref{sec:exp-icu-factors} vary the number of latent factor nodes the mechanism assumes.

\subsection{Historical baselines}
The historical rows of Table~\ref{tab:icu} are earlier source-trained implementations evaluated on the same fixed target cohorts. Early fusion uses a two-layer width-256 GELU network with dropout $0.1$; late fusion adds the logits of five such modality networks trained jointly and has a different parameter count. V-REx, GroupDRO and CORAL use local objectives based on \citet{krueger2021out,sagawa2020distributionally,sun2016deep}; ERM and the weight-averaging model sample up to 256 observations per step, while the environment-based objectives sample up to 16 hospitals with up to 32 observations each, so compute is not exactly matched. V-REx uses weight 1 for five epochs and 10 thereafter; GroupDRO a log-weight step of $0.01$; CORAL alignment weight 1. All use Adam at $10^{-3}$ with weight decay $10^{-4}$, at most 60 epochs, patience 8 and source-validation pooled AUROC, source-fitted standardization and no target adaptation. The SWAD-style baseline's backbone and schedule differ from the rich architecture, which is why the matched controls are necessary. These comparisons provide context; they cannot by themselves isolate the mechanism branch.

\subsection{Construct probes and post-fit diagnostics}
Construct probes use source-selected ridge regression with $\alpha\in\{0.1,1,10,100\}$ and age and sex controls; they read physiology or state evidence, never the context representation. The held-out laboratory families can nevertheless enter the mortality context branch. Creatinine $R^2$ is $0.307$ and $0.222$ from the continuous representation and $0.059$ and $-0.075$ from state evidence, against $0.512$ and $0.437$ for a source-fitted 38-dimensional PCA reference; these proxies are not disease states, and measured-only target subgroups may be selected by measurement practices. Post-fit diagnostics find that the first natural-shift direction explains $37.1$ to $57.6\%$ of the fitted squared singular-value energy of the clinical mechanisms; this is neither a causal dimension estimate nor proof that a larger rank would help. Training loss commonly falls while validation NLL increases, and early stopping often selects the warm-up or an early epoch; checkpoints before the warm-up were not selectable in this protocol.

\section{ADNI: protocol and complete results}
\label{app:adni}

\begin{table*}[t]
\centering\small
\caption{Complete ADNI primary results on the same 217 target patients: mean$\pm$SD over three downstream fits sharing one upstream encoder. AUROC and AUPRC are macro one-versus-rest ($\uparrow$); NLL and Brier are lower-is-better; Brier sums squared probability errors over the three classes. The lower group accesses 54 separate unlabeled target-context patients.}
\label{tab:adni}
\resizebox{\linewidth}{!}{%
\begin{tabular}{lcccc}
\toprule
Method & Macro AUROC & Macro AUPRC & NLL & Brier\\
\midrule
\multicolumn{5}{l}{\emph{Source-only prediction; no target adaptation}}\\
ERM early fusion & $0.738{\scriptstyle\pm.004}$ & $0.587{\scriptstyle\pm.002}$ & $0.917{\scriptstyle\pm.043}$ & $0.555{\scriptstyle\pm.008}$\\
ERM late fusion & $0.747{\scriptstyle\pm.008}$ & $0.596{\scriptstyle\pm.011}$ & $0.909{\scriptstyle\pm.010}$ & $0.551{\scriptstyle\pm.006}$\\
ERM MRI only & $0.582{\scriptstyle\pm.023}$ & $0.419{\scriptstyle\pm.018}$ & $1.022{\scriptstyle\pm.019}$ & $0.619{\scriptstyle\pm.012}$\\
ERM tabular only & $0.734{\scriptstyle\pm.007}$ & $0.586{\scriptstyle\pm.016}$ & $0.897{\scriptstyle\pm.020}$ & $0.543{\scriptstyle\pm.005}$\\
IRM-style & $0.728{\scriptstyle\pm.010}$ & $0.567{\scriptstyle\pm.015}$ & $0.923{\scriptstyle\pm.064}$ & $0.563{\scriptstyle\pm.026}$\\
V-REx & $0.736{\scriptstyle\pm.008}$ & $0.583{\scriptstyle\pm.011}$ & $0.940{\scriptstyle\pm.011}$ & $0.551{\scriptstyle\pm.005}$\\
GroupDRO & $0.736{\scriptstyle\pm.009}$ & $0.580{\scriptstyle\pm.012}$ & $0.906{\scriptstyle\pm.039}$ & $0.557{\scriptstyle\pm.018}$\\
CORAL & $0.736{\scriptstyle\pm.010}$ & $0.580{\scriptstyle\pm.013}$ & $0.901{\scriptstyle\pm.034}$ & $0.553{\scriptstyle\pm.015}$\\
Lattice ERM (wide) & $0.706{\scriptstyle\pm.015}$ & $0.553{\scriptstyle\pm.027}$ & $0.919{\scriptstyle\pm.016}$ & $0.557{\scriptstyle\pm.009}$\\
Lattice ERM (narrow) & $0.744{\scriptstyle\pm.004}$ & $0.576{\scriptstyle\pm.012}$ & $0.941{\scriptstyle\pm.008}$ & $0.549{\scriptstyle\pm.004}$\\
Lattice ERM (deep) & $0.698{\scriptstyle\pm.016}$ & $0.558{\scriptstyle\pm.013}$ & $0.980{\scriptstyle\pm.046}$ & $0.582{\scriptstyle\pm.021}$\\
\method{} source strict & $0.711{\scriptstyle\pm.019}$ & $0.538{\scriptstyle\pm.029}$ & $0.977{\scriptstyle\pm.025}$ & $0.579{\scriptstyle\pm.004}$\\
\method{} source common-style & $0.717{\scriptstyle\pm.002}$ & $0.564{\scriptstyle\pm.002}$ & $1.052{\scriptstyle\pm.009}$ & $0.627{\scriptstyle\pm.005}$\\
\midrule
\multicolumn{5}{l}{\emph{Independent unlabeled target context; 54 patients}}\\
Target-normalized ERM & $0.734{\scriptstyle\pm.012}$ & $0.579{\scriptstyle\pm.012}$ & $0.956{\scriptstyle\pm.047}$ & $0.567{\scriptstyle\pm.011}$\\
\method{} target strict & $0.709{\scriptstyle\pm.021}$ & $0.535{\scriptstyle\pm.035}$ & $0.973{\scriptstyle\pm.033}$ & $0.577{\scriptstyle\pm.009}$\\
\method{} target common-style & $0.715{\scriptstyle\pm.002}$ & $0.563{\scriptstyle\pm.002}$ & $1.053{\scriptstyle\pm.011}$ & $0.628{\scriptstyle\pm.006}$\\
\method{} no-environment & $0.726{\scriptstyle\pm.000}$ & $0.586{\scriptstyle\pm.000}$ & $0.891{\scriptstyle\pm.000}$ & $0.543{\scriptstyle\pm.000}$\\
\method{} affine posterior & $0.716{\scriptstyle\pm.003}$ & $0.563{\scriptstyle\pm.004}$ & $1.054{\scriptstyle\pm.014}$ & $0.629{\scriptstyle\pm.007}$\\
\method{} deep strict & $0.710{\scriptstyle\pm.005}$ & $0.588{\scriptstyle\pm.020}$ & $0.919{\scriptstyle\pm.027}$ & $0.560{\scriptstyle\pm.022}$\\
\method{} deep common-style & $0.761{\scriptstyle\pm.017}$ & $0.602{\scriptstyle\pm.011}$ & $0.937{\scriptstyle\pm.031}$ & $0.529{\scriptstyle\pm.013}$\\
\bottomrule
\end{tabular}}
\end{table*}

\paragraph*{Endpoint and cohort}
The ADNI experiment classifies contemporaneous CN, MCI and AD diagnoses; it does not predict progression. The cohort contains 1{,}807 patients: 1{,}264 source-training, 272 source-validation and 271 target patients, where within each diagnosis the oldest 15\% define an age-shift target, an age-based selection shift. This diagnosis-controlled holdout had been evaluated previously, so the analysis is exploratory. A fixed, label-blind patient-hash rule reserves 54 validation patients and 54 target patients as separate unlabeled context sets; the remaining 218 source-validation patients select downstream checkpoints, and all 20 methods share the same 217-patient primary target (56 CN, 113 MCI, 48 AD). Roles are patient-disjoint. Source environments are source-training age tertiles (420, 415 and 429 patients), not hospitals.

\paragraph*{Shared representation and replication}
All methods reuse one upstream source-supervised encoder selected on source validation (encoder seed 2026), whose four 16-dimensional views represent T1 MRI, demographics, behaviour and genetics/history, each with an availability indicator; all retained patients have MRI, and image inputs were screened for harmful content with ShieldGemma~2~\citep{zeng2025shieldgemma2} as an auxiliary content-moderation check. Standardization uses source training only. Seeds 42, 3407 and 2026 repeat downstream fitting, not upstream MRI training, so sample SD reflects downstream variability on a shared encoder and cohort. Some clinical measurements use nearest visits rather than strictly pre-MRI visits; neither prospective availability nor an independent source-validation test is claimed.

\paragraph*{Identity of the variant}
These results use the earlier joint categorical \method{} extension: a linear lattice, three Gaussian latent components, a tempered label-assisted EM objective with weight $0.25$, class-pair mean-relevance filtering across age environments, and a categorical mixture head. It differs from the common-coordinate conditional-evidence estimator and from the clinical rich-encoder experiment, does not instantiate their estimator, and its tempered objective is not a normalized joint likelihood. Strict source deployment removes complementary-coordinate shared style and source-environment head intercepts while retaining source posterior offsets; common-style variants retain the shared style term; target variants estimate posterior-branch centring from the separate 54-patient unlabeled context. Target labels are never used for fitting or checkpoint selection.

\paragraph*{Training and reporting}
Supervised networks have joint softmax outputs and are selected on source-validation multiclass NLL; MLPs use two 256-unit GELU layers, dropout $0.1$, Adam at $10^{-3}$ with weight decay $10^{-4}$, at most 60 epochs and patience 8. DG settings are fixed local implementations. State EM uses at most 200 iterations; categorical heads use 600 Adam updates at learning rate $0.02$. All 60 primary method--seed cells are complete and all 24 state-EM traces report convergence, which is not evidence of identification. An archived paired diagnosis-stratified bootstrap (2{,}000 patient draws of the mean three-seed probability vectors) conditions on fixed models, omits retraining uncertainty and is not multiplicity-adjusted; for example, target strict \method{} minus early ERM has an ensemble macro-AUROC difference of $-0.020$ with 95\% interval $[-0.063,0.022]$. Primary no-environment \method{} predicts no AD cases in any seed, so some per-class precision values are undefined; all four metrics in Table~\ref{tab:adni} have three valid seeds. The evidence supports a bounded exploratory result, not a clinical improvement claim.

\end{document}